\Crefname{corollary}{Corollary}{Corollaries}
\Crefname{proposition}{Proposition}{Propositions}
\Crefname{theorem}{Theorem}{Theorems}
\Crefname{definition}{Definition}{Definitions}
\Crefname{assumption}{Assumption}{Assumptions}
\Crefname{example}{Example}{Examples}
\Crefname{remark}{Remark}{Remarks}
\Crefname{setting}{Setting}{Settings}
\Crefname{lemma}{Lemma}{Lemmas}
\newcommand{\commentout}[1]{}
\newcommand{\junk}[1]{}
\declaretheorem[name=Theorem,refname={Theorem,Theorems},Refname={Theorem,Theorems}]{theorem}
\declaretheorem[name=Lemma,refname={Lemma,Lemmas},Refname={Lemma,Lemmas},sibling=theorem]{lemma}
\newcommand{\cD}{\mathcal{D}}
\newcommand{\cF}{\mathcal{F}}
\newcommand{\cX}{\mathcal{X}}
\newcommand{\realset}{\mathbb{R}}
\newcommand{\E}[1]{\mathbb{E} \left[#1\right]}
\newcommand{\condE}[2]{\mathbb{E} \left[#1 \,\middle|\, #2\right]}
\newcommand{\Et}[1]{\mathbb{E}_t \left[#1\right]}
\newcommand{\prob}[1]{\mathbb{P} \left(#1\right)}
\newcommand{\condprob}[2]{\mathbb{P} \left(#1 \,\middle|\, #2\right)}
\newcommand{\probt}[1]{\mathbb{P}_t \left(#1\right)}
\newcommand{\var}[1]{\mathrm{var} \left[#1\right]}
\newcommand{\normGt}[1]{\normw{#1}{G_t^{-1}}}
\newcommand{\abs}[1]{\left|#1\right|}
\newcommand{\I}[1]{\mathds{1} \! \left\{#1\right\}}
\newcommand{\normw}[2]{\|#1\|_{#2}}
\newcommand{\set}[1]{\left\{#1\right\}}
\newcommand{\T}{^\top}
\DeclareMathOperator*{\argmax}{arg\,max\,}
\DeclareMathOperator*{\argmin}{arg\,min\,}
\let\det\relax
\DeclareMathOperator{\det}{det}
\mathchardef\mhyphen="2D
\newcommand{\deepfl}{\ensuremath{\tt DeepFL}\xspace}
\newcommand{\deepfpl}{\ensuremath{\tt DeepFPL}\xspace}
\newcommand{\glmfpl}{\ensuremath{\tt GLM\mhyphen FPL}\xspace}
\newcommand{\glmtsl}{\ensuremath{\tt GLM\mhyphen TSL}\xspace}
\newcommand{\glmucb}{\ensuremath{\tt GLM\mhyphen UCB}\xspace}
\newcommand{\lints}{\ensuremath{\tt LinTS}\xspace}
\newcommand{\linucb}{\ensuremath{\tt LinUCB}\xspace}
\newcommand{\repglmfpl}{\ensuremath{\tt repGLM\mhyphen FPL}\xspace}
\newcommand{\ucbglm}{\ensuremath{\tt UCB\mhyphen GLM}\xspace}
\begin{document}

\twocolumn[
\aistatstitle{Randomized Exploration in Generalized Linear Bandits}
\aistatsauthor{
Branislav Kveton \And
Manzil Zaheer \And
Csaba Szepesv\'ari \And
Lihong Li}
\aistatsaddress{
Google Research \And
Google Research \And
DeepMind / University of Alberta \And
Google Research}
\aistatsauthor{
Mohammad Ghavamzadeh \And
Craig Boutilier}
\aistatsaddress{
Facebook AI Research \And
Google Research}]

\begin{abstract}
We study two randomized algorithms for generalized linear bandits. The first, \glmtsl, samples a generalized linear model (GLM) from the \emph{Laplace approximation} to the posterior distribution. The second, \glmfpl, fits a GLM to a \emph{randomly perturbed history} of past rewards. We analyze both algorithms and derive $\tilde{O}(d \sqrt{n \log K})$ upper bounds on their $n$-round regret, where $d$ is the number of features and $K$ is the number of arms. The former improves on prior work while the latter is the first for Gaussian noise perturbations in non-linear models. We empirically evaluate both \glmtsl and \glmfpl in logistic bandits, and apply \glmfpl to neural network bandits. Our work showcases the role of randomization, beyond posterior sampling, in exploration.
\end{abstract}


\section{Introduction}
\label{sec:introduction}

A \emph{multi-armed bandit} \citep{lai85asymptotically,auer02finitetime,lattimore19bandit} is an online learning problem where actions of the \emph{learning agent} are represented by \emph{arms}. The arms can be treatments in a clinical trial or ads on a website. After an arm is \emph{pulled}, the agent receives a \emph{stochastic reward}. The agent aims to maximize its expected cumulative reward. Since the agent does not know the mean rewards of the arms in advance, it faces the \emph{exploration-exploitation dilemma}: \emph{explore}, and learn more about the reward distributions of the arms; or \emph{exploit}, and pull the arm with the highest estimated reward thus far.

A \emph{generalized linear bandit} \citep{filippi10parametric,zhang16online,li17provably,jun17scalable} is a variant of the multi-armed bandit where the expected rewards of arms are modeled using a \emph{generalized linear model (GLM)} \citep{mccullagh89generalized}. Specifically, the expected reward is a known function $\mu$, such as a sigmoid, of the dot product of a known feature vector and an unknown parameter vector. In the earlier clinical example, the feature and parameter vectors could be treatment indicators and effects of individual treatments, respectively.

Most existing algorithms for generalized linear bandits are based on \emph{upper confidence bounds (UCBs)}. Motivated by the superior performance of randomized GLM algorithms \citep{chapelle11empirical,russo18tutorial}, we study two randomized algorithms for this class of problems, \glmtsl and \glmfpl. \glmtsl samples a GLM from the Laplace approximation to the posterior distribution. \glmfpl fits a GLM to a \emph{randomly perturbed history} of past rewards.

We analyze \glmtsl and \glmfpl, and prove that their $n$-round regret is $\tilde{O}(d \sqrt{n \log K})$, where $d$ is the number of features and $K$ is the number of arms. The regret bound of \glmtsl improves on the best prior regret bound \citep{abeille17linear} by a multiplicative factor of $\sqrt{d / \log K}$ in the finite arm setting and matches it in the infinite arm setting. The regret bound of \glmfpl is the first for Gaussian noise perturbations in non-linear models, although we derive it under an additional assumption on arm features.

We also evaluate \glmtsl and \glmfpl empirically. Both have a state-of-the-art performance in logistic bandits, the most important practical use case of GLM bandits. Just as importantly, the perturbation scheme in \glmfpl generalizes straightforwardly to complex reward models, such as a neural network. To demonstrate this, we apply \glmfpl to high-dimensional classification problems and show that it can learn complex neural network mappings from features to rewards. The simplicity of \glmfpl suggests that it may find broad application in the future.


\section{Setting}
\label{sec:setting}

\begin{algorithm}[t]
  \caption{General randomized exploration in generalized linear bandits.}
  \label{alg:randomized mle}
  \begin{algorithmic}[1]
    \State \textbf{Inputs}: Number of exploration rounds $\tau$
    \Statex 
    \For{$t = 1, \dots, n$}
      \If{$t > \tau$}
        \State $\tilde{\theta}_t \gets$ Randomized MLE on
        $\set{(X_\ell, Y_\ell)}_{\ell = 1}^{t - 1}$
        \label{alg:randomized:estimation}
        \State $I_t \gets \argmax_{i \in [K]} x_i\T \tilde{\theta}_t$
        \label{alg:randomized:pulled arm}
      \Else
        \State Choose $I_t$ based on $\set{X_\ell}_{\ell = 1}^{t - 1}$
        \label{alg:randomized:initialization}
      \EndIf
      \State Pull arm $I_t$ and get reward $Y_{I_t, t}$
      \State $X_t \gets x_{I_t}, \ Y_t \gets Y_{I_t, t}$
    \EndFor
  \end{algorithmic}
\end{algorithm}

We adopt the following notation. The set $\set{1, \dots, n}$ is denoted by $[n]$. All vectors are column vectors. For any \emph{positive semi-definite (PSD)} matrix $M$, $\lambda_{\min}(M) \geq 0$ is the minimum eigenvalue of $M$. For any $n \times n$ PSD matrices $M_1$ and $M_2$, $M_1 \preceq M_2$ if and only if $x\T M_1 x \leq x\T M_2 x$ for all $x \in \realset^d$. We let $\normw{x}{M} = \sqrt{x\T M x}$ and $\mathrm{Ber}(p)$ be the Bernoulli distribution with mean $p$. The indicator that event $E$ occurs is $\I{E}$. We use $\tilde{O}$ for the big-O notation up to logarithmic factors in horizon $n$.

A \emph{generalized linear model (GLM)} is a probabilistic model where observation $Y$ given feature vector $x \in \realset^d$ has an exponential-family distribution with mean $\mu(x\T \theta)$, where $\mu$ is the \emph{mean function} and $\theta \in \realset^d$ are model parameters \citep{mccullagh89generalized}. Let $\cD = \set{(x_\ell, y_\ell)}_{\ell = 1}^n$ be a set of $n$ observations, where $x_\ell \in \realset^d$ and $y_\ell \in \realset$. The \emph{negative log likelihood} of $\cD$ under model parameters $\theta$ is
\begin{align*}
  L(\cD; \theta)
  = \sum_{\ell = 1}^{\abs{\cD}} b(x_\ell\T \theta) -
  y_\ell x_\ell\T \theta - c(y_\ell)\,,
\end{align*}
where $c$ is a real function, and $b$ is twice continuously differentiable and its derivative is the mean function, $\dot{b} = \mu$. The \emph{gradient} and \emph{Hessian} of $L(\cD; \theta)$ with respect to $\theta$ are
\begin{align}
  \nabla L(\cD; \theta)
  & = \sum_{\ell = 1}^{\abs{\cD}} (\mu(x_\ell\T \theta) - y_\ell) x_\ell\,,
  \label{eq:gradient} \\
  \nabla^2 L(\cD; \theta)
  & = \sum_{\ell = 1}^{\abs{\cD}} \dot{\mu}(x_\ell\T \theta) x_\ell x_\ell\T\,,
  \label{eq:hessian}
\end{align}
where $\dot{\mu}$ denotes the derivative of $\mu$. The mean function $\mu$ is increasing and therefore its derivative $\dot{\mu}$ is positive. The \emph{maximum likelihood estimate (MLE)} of model parameters is a vector $\theta \in \realset^d$ such that $\nabla L(\cD; \theta) = \mathbf{0}$.

A \emph{stochastic GLM bandit} \citep{filippi10parametric} is an online learning problem where the rewards of arms are generated by some underlying GLM. Let $K$ be the number of arms, $x_i \in \realset^d$ be the \emph{feature vector} of arm $i \in [K]$, and $\theta_\ast \in \realset^d$ be an unknown \emph{parameter vector}. The \emph{reward} of arm $i$ in round $t \in [n]$, $Y_{i, t}$, is drawn i.i.d.\ from a distribution with mean $\mu_i = \mu(x_i\T \theta_\ast)$. We assume that $\eta_{i, t} = Y_{i, t} - \mu_i$ is $\sigma^2$-sub-Gaussian, that is
\begin{align*}
  \E{\exp[\lambda \eta_{i, t}]} \leq \exp[\lambda^2 \sigma^2 / 2]
\end{align*}
holds for all arms $i$, rounds $t$, and $\lambda \geq 0$. In round $t$, the agent \emph{pulls} arm $I_t \in [K]$ and observes its reward $Y_{I_t, t}$. The goal of the agent is to maximize its \emph{expected cumulative reward} in $n$ rounds. To simplify notation, we denote the feature vector of arm $I_t$ by $X_t = x_{I_t}$ and its stochastic reward by $Y_t = Y_{I_t, t}$.

Without loss of generality, we assume that arm $1$ is the \emph{unique optimal arm}, that is $\mu_1 > \max_{i > 1} \mu_i$. Let $\Delta_i = \mu_1 - \mu_i$ be the \emph{suboptimality gap} of arm $i$. Maximization of the expected cumulative reward over $n$ rounds is equivalent to minimizing the \emph{expected $n$-round regret}, which is defined as
\begin{align}
  R(n)
  = \sum_{i = 2}^K \Delta_i \E{\sum_{t = 1}^n \I{I_t = i}}\,.
  \label{eq:regret}
\end{align}


\section{Algorithms}
\label{sec:algorithms}

Our GLM bandit algorithms follow the template in \cref{alg:randomized mle}. They \emph{explore} initially in $\tau$ rounds, so that the estimated parameters in subsequent rounds have \say{good} properties. The exploration strategy is detailed in \cref{sec:discussion}. After the initial exploration, they act greedily with respect to \emph{randomized parameter vectors} $\tilde{\theta}_t$. Specifically, they pull arm $I_t = \argmax_{i \in [K]} x_i\T \tilde{\theta}_t$ in round $t$. If this maximum is not unique, any tie breaking can be used.

\subsection{Algorithm \glmtsl}
\label{sec:algorithm glm-tsl}

We study two algorithms. The first algorithm, \glmtsl, is a variant of \emph{Thompson sampling} \citep{thompson33likelihood} where the posterior of $\theta_\ast$ is approximated by its \emph{Laplace approximation}. The randomized parameter vector is sampled from the Laplace approximation
\begin{align}
  \tilde{\theta}_t
  \sim \mathcal{N}(\bar{\theta}_t, a^2 H_t^{-1})\,,
  \label{eq:glm-tsl}
\end{align}
where
\begin{align}
  \begin{split}
    \bar{\theta}_t
    & = \argmin_{\theta \in \realset^d}
    L(\set{(X_\ell, Y_\ell)}_{\ell = 1}^{t - 1}; \theta)\,, \\
    H_t
    & = \sum_{\ell = 1}^{t - 1} \dot{\mu}(X_\ell\T \bar{\theta}_t) X_\ell X_\ell\T\,,
  \end{split}
  \label{eq:mle}
\end{align}
and $a > 0$ is a tunable parameter. \citet{chapelle11empirical} and \citet{russo18tutorial} evaluated \glmtsl empirically. In addition, \citet{abeille17linear} proved that \glmtsl has $\tilde{O}(d^\frac{3}{2} \sqrt{n})$ regret in the infinite arm setting. We prove that \glmtsl has $\tilde{O}(d \sqrt{n \log K})$ regret when the number of arms is $K$.

\subsection{Algorithm \glmfpl}
\label{sec:algorithm glm-fpl}

We also propose a \emph{follow-the-perturbed-leader (FPL)} algorithm, \glmfpl. In \glmfpl, the randomized parameter vector is the MLE from past $t - 1$ rewards \emph{perturbed with Gaussian noise},
\begin{align}
  \tilde{\theta}_t
  = \argmin_{\theta \in \realset^d}
  L(\set{(X_\ell, Y_\ell + Z_\ell)}_{\ell = 1}^{t - 1}; \theta)\,,
  \label{eq:glm-fpl}
\end{align}
where $Z_\ell \sim \mathcal{N}(0, a^2)$ are normal random variables that are resampled in each round, independently of each other and the history, and $a > 0$ is a tunable parameter. Surprisingly, this perturbation does not change the parameter estimation problem. In particular, it only shifts the gradient of the log likelihood in \eqref{eq:gradient} by $Z_\ell X_\ell$ and the Hessian in \eqref{eq:hessian} remains positive semi-definite. In this work, we show that \glmfpl has $\tilde{O}(d \sqrt{n \log K})$ regret when the number of arms is $K$, under an additional assumption on arm features.

The design of \glmfpl is motivated by the equivalence of posterior sampling and perturbations by Gaussian noise in linear models \citep{lu17ensemble}, when the prior of $\theta_\ast$ and rewards are Gaussian. In GLMs, these two are not equivalent. Thus \glmtsl and \glmfpl are different algorithms. \glmfpl can be also viewed as an instance of randomized least-squares value iteration \citep{osband16generalization} applied to bandits. The specific instance in this work, additive Gaussian noise in a GLM, is novel. Finally, we note that the perturbation in \eqref{eq:glm-fpl} can be directly applied to more complex models, such as neural networks (\cref{sec:experiments}). This is arguably its most attractive property.

\subsection{Computationally-Efficient Implementations}
\label{sec:computationally-efficient implementations}

The MLEs in \eqref{eq:glm-tsl} and \eqref{eq:glm-fpl} can be computed by \emph{iteratively reweighted least squares (IRLS)} \citep{wolke88iteratively}, which uses Newton's method. Roughly speaking, each step of IRLS multiplies the inverse of \eqref{eq:hessian} and \eqref{eq:gradient}. If \eqref{eq:hessian} and \eqref{eq:gradient} can be expressed independently of round $t$, the computational cost of an IRLS step does not increase with $t$. This is viable for any set of feature vectors $\cX$ using
\begin{align*}
  \textstyle
  \sum_{x \in \cX} (N_x \mu(x^T \theta) - Y_x) x\,, \quad
  \sum_{x \in \cX} N_x \dot{\mu}(x^T \theta) x x^T\,,
\end{align*}
where $N_x$ is the number of times that $x$ appears in history $\cD$, and $Y_x$ is the sum of its rewards. Both $N_x$ and $Y_x$ can be updated incrementally. Finally, adding $\mathcal{N}(0, a^2)$ noise to each reward in \eqref{eq:glm-fpl} is equivalent to adding $\mathcal{N}(0, N_x a^2)$ noise to each $Y_x$ above.

The pulled arm in line \ref{alg:randomized:pulled arm} of \cref{alg:randomized mle} can be computed efficiently even when the arm space is infinite, such as an intersection of half spaces. This is true of prior GLM bandit algorithms (\cref{sec:related work}). The MLE in line \ref{alg:randomized:estimation} cannot be computed efficiently in general, independently of round $t$, as in all prior algorithms except that of \citet{jun17scalable}. We study one approximation empirically in \cref{sec:deep bandit experiments}.


\section{Analysis}
\label{sec:analysis}

Our analysis is organized as follows. In \cref{sec:technical challenges}, we review technical challenges that arise in analyzing GLM bandits and their solutions. In \cref{sec:outline}, we outline our analysis. In \cref{sec:glm-tsl analysis,sec:glm-fpl analysis}, we prove regret bounds for \glmtsl and \glmfpl. We discuss them in \cref{sec:discussion}.

\subsection{Technical Challenges}
\label{sec:technical challenges}

One challenge in analyzing GLMs is that they do not have closed-form solutions. Nevertheless, their solutions can be expressed using the gradient and Hessian of the log likelihood (\cref{sec:setting}). This is the key idea in the analyses of GLM bandits \citep{filippi10parametric,li17provably} and we present it below.

\begin{lemma}
\label{lem:workhorse} 
Let $\cD_1 = \set{(x_\ell, y_{\ell, 1})}_{\ell = 1}^n$ be a set of $n$ observations and $\cD_2 = \set{(x_\ell, y_{\ell, 2})}_{\ell = 1}^n$ have the same features as $\cD_1$. Let $\theta_1$ be the minimizer of $L(\cD_1; \theta)$ and $\theta_2$ be the minimizer of $L(\cD_2; \theta)$. Then
\begin{align*}
  \sum_{\ell = 1}^n (y_{\ell, 2} - y_{\ell, 1}) x_\ell
  = \nabla^2 L(\cD_1; \theta') (\theta_2 - \theta_1)\,,
\end{align*}
where $\theta' = \alpha \theta_1 + (1 - \alpha) \theta_2$ for some $\alpha \in [0, 1]$.
\end{lemma}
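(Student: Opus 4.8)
The plan is to derive the identity from the first-order optimality conditions of the two maximum likelihood problems, combined with a mean value theorem applied to the gradient map $\theta \mapsto \nabla L(\cD_1; \theta)$. The first step is to record that, since $\theta_1$ minimizes the convex function $L(\cD_1; \cdot)$ and $\theta_2$ minimizes $L(\cD_2; \cdot)$, both are stationary points: $\nabla L(\cD_1; \theta_1) = \mathbf{0}$ and $\nabla L(\cD_2; \theta_2) = \mathbf{0}$, with the gradients given explicitly by \eqref{eq:gradient}.

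The crucial observation is that $\cD_1$ and $\cD_2$ share the same feature vectors, so the mean-function contributions to the two gradients coincide and cancel in the difference: for every $\theta \in \realset^d$,
\[
  \nabla L(\cD_1; \theta) - \nabla L(\cD_2; \theta)
  = \sum_{\ell = 1}^n (y_{\ell, 2} - y_{\ell, 1}) x_\ell ,
\]
a vector that does not depend on $\theta$. Evaluating at $\theta = \theta_2$ and using $\nabla L(\cD_2; \theta_2) = \mathbf{0}$ shows that this vector equals $\nabla L(\cD_1; \theta_2)$; subtracting $\nabla L(\cD_1; \theta_1) = \mathbf{0}$ then gives
\[
  \sum_{\ell = 1}^n (y_{\ell, 2} - y_{\ell, 1}) x_\ell
  = \nabla L(\cD_1; \theta_2) - \nabla L(\cD_1; \theta_1) .
\]

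It remains to expand the right-hand side along the segment joining $\theta_1$ and $\theta_2$. Because $b$ is twice continuously differentiable, $L(\cD_1; \cdot)$ is $C^2$, so the fundamental theorem of calculus applied to $s \mapsto \nabla L(\cD_1; \theta_1 + s(\theta_2 - \theta_1))$ on $[0, 1]$ yields
\[
  \nabla L(\cD_1; \theta_2) - \nabla L(\cD_1; \theta_1)
  = \left( \int_0^1 \nabla^2 L(\cD_1; \theta_1 + s(\theta_2 - \theta_1)) \dif s \right)(\theta_2 - \theta_1) ,
\]
which already proves the claim with $\nabla^2 L(\cD_1; \theta')$ replaced by this averaged Hessian. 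To land on a single convex combination $\theta' = \alpha \theta_1 + (1 - \alpha)\theta_2$, I would use that $\nabla^2 L(\cD_1; \theta) = \sum_\ell \dot\mu(x_\ell\T \theta) x_\ell x_\ell\T$ with $\dot\mu$ continuous, so each averaged weight $\int_0^1 \dot\mu(x_\ell\T(\theta_1 + s(\theta_2 - \theta_1))) \dif s$ is an intermediate value of the continuous scalar function $s \mapsto \dot\mu(x_\ell\T(\theta_1 + s(\theta_2 - \theta_1)))$, and invoke the intermediate value theorem. I expect this last step to be the only delicate point: for a vector-valued map the averaged Hessian is, strictly speaking, realized as $\nabla^2 L(\cD_1; \theta')$ at a single point only if the intermediate point is allowed to depend on the observation index $\ell$, so I would either phrase the statement with that understanding or simply retain the averaged positive semi-definite matrix, which is all the downstream analysis actually uses (its spectrum is bracketed by that of $\nabla^2 L(\cD_1; \cdot)$ over the segment, e.g.\ it dominates $\big( \inf_{s \in [0,1]} \min_\ell \dot\mu(x_\ell\T(\theta_1 + s(\theta_2 - \theta_1))) \big) \sum_\ell x_\ell x_\ell\T$).
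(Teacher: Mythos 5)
Your argument follows the same route as the paper: the difference of gradients is independent of $\theta$ because the feature vectors coincide, the stationarity conditions $\nabla L(\cD_1; \theta_1) = \nabla L(\cD_2; \theta_2) = \mathbf{0}$ reduce the left-hand side to $\nabla L(\cD_1; \theta_2) - \nabla L(\cD_1; \theta_1)$, and a mean-value argument along the segment finishes the proof. Where you differ is the last step: the paper simply invokes \say{the mean value theorem} for the vector-valued gradient map to produce a single $\theta'$ on the segment, while you use the fundamental theorem of calculus to get the averaged Hessian $\int_0^1 \nabla^2 L(\cD_1; \theta_1 + s(\theta_2 - \theta_1)) \dif s$ and then observe, correctly, that collapsing this to $\nabla^2 L(\cD_1; \theta')$ at one common point is not licensed by the scalar MVT, since applying it termwise to $\mu(x_\ell\T \theta_2) - \mu(x_\ell\T \theta_1)$ yields an intermediate point that may depend on $\ell$. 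This is a genuine (if standard) imprecision in the lemma as stated, and your proposed fix is the right one: retain the averaged matrix, which is exactly what the downstream analysis needs, because it inherits the spectral sandwich $\dot{\mu}_{\min} \sum_\ell x_\ell x_\ell\T \preceq \cdot \preceq \dot{\mu}_{\max} \sum_\ell x_\ell x_\ell\T$ on the events where the segment lies in the unit ball (by convexity), and it is diagonal under the one-nonzero-entry assumption used for \glmfpl. So your proof is correct and slightly more careful than the paper's; the paper's shortcut buys a cleaner statement at the cost of this unflagged $\ell$-dependence.
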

\begin{proof}
By the definition of the gradient in \eqref{eq:gradient},
\begin{align*}
  \nabla L(\cD_1; \theta) - \nabla L(\cD_2; \theta)
  = \sum_{\ell = 1}^n (y_{\ell, 2} - y_{\ell, 1}) x_\ell
\end{align*}
holds for any $\theta$. Moreover, from the definitions of $\theta_1$ and $\theta_2$, $\nabla L(\cD_1; \theta_1) = \nabla L(\cD_2; \theta_2) = \mathbf{0}$. Now we apply these identities and obtain
\begin{align*}
  \sum_{\ell = 1}^n (y_{\ell, 2} - y_{\ell, 1}) x_\ell
  & = \nabla L(\cD_1; \theta_2) - \nabla L(\cD_2; \theta_2) \\
  & = \nabla L(\cD_1; \theta_2) - \nabla L(\cD_1; \theta_1) \\
  & = \nabla^2 L(\cD_1; \theta') (\theta_2 - \theta_1)\,.
\end{align*}
where $\theta'$ is defined in the claim.
\end{proof}

Another challenge is $\dot{\mu}(x_\ell\T \theta)$ in \eqref{eq:hessian}. To apply ideas from linear bandit analyses, it must be eliminated. We do so as follows. Let $G = \sum_{\ell = 1}^{\abs{\cD}} x_\ell x_\ell\T$ be an \emph{unweighted Hessian} with the same features as \eqref{eq:hessian}. Let $c_{\min} \leq \dot{\mu}(x_\ell\T \theta) \leq c_{\max}$ for some $c_{\min}$ and $c_{\max}$, and for all $\ell \in [\abs{\cD}]$. Then from the definition of \eqref{eq:hessian}, $c_{\min} G \preceq \nabla^2 L(\cD; \theta) \preceq c_{\max} G$ and $c_{\min}^{-1} G^{-1} \succeq (\nabla^2 L(\cD; \theta))^{-1} \succeq c_{\max}^{-1} G^{-1}$. Because of this, the derivatives of $\mu$ must be controlled.

To control the derivatives of $\mu$ at $\bar{\theta}_t$ and $\tilde{\theta}_t$ (\cref{sec:algorithms}), we initially explore so that $\bar{\theta}_t$ and $\tilde{\theta}_t$ are in the unit ball centered at $\theta_\ast$ with a high probability. This gives rise to
\begin{align*}
  \textstyle
  \dot{\mu}_{\min}
  = \min_{\normw{x}{2} \leq 1, \, \normw{\theta - \theta_\ast}{2} \leq 1}
  \dot{\mu}(x\T \theta)
\end{align*}
in our regret bounds, the \emph{minimum derivative of $\mu$} in the unit ball centered at $\theta_\ast$. This trick \citep{li17provably} requires that $\normw{x_i}{2} \leq 1$ for all arms $i$, and we assume this in our analysis. We define the \emph{maximum derivative of $\mu$} as
\begin{align*}
  \textstyle
  \dot{\mu}_{\max}
  = \max_{\normw{x}{2} \leq 1, \, \theta \in \realset^d} \dot{\mu}(x\T \theta)\,.
\end{align*}
This factor is typically easy to control. In logistic regression, for instance, $\dot{\mu}_{\max} = 1 / 4$.

\subsection{Outline of Our Analyses}
\label{sec:outline}

Let $\theta_\ast$ be the unknown parameter vector, $\bar{\theta}_t$ be its MLE in round $t$, and $\tilde{\theta}_t$ be the randomized MLE in round $t$. At a high level, we bound the regret under assumptions that $\bar{\theta}_t \to \theta_\ast$, $\tilde{\theta}_t \to \bar{\theta}_t$, and $\tilde{\theta}_t$ is optimistic. We show that the corresponding favorable conditions hold with a high probability and define the corresponding events below.

Let $\cF_t = \sigma(I_1, \dots, I_t, Y_1, \dots, Y_t)$ be the $\sigma$-algebra generated by the pulled arms and their rewards by the end of round $t \in [n]$. We let $\cF_0 = \set{\emptyset, \Omega}$, where $\Omega$ is the sample space of the probability space that holds all random variables. Then $(\cF_t)_t$ is a filtration. Let
\begin{align*}
  \probt{\cdot}
  = \condprob{\cdot}{\cF_{t - 1}}\,, \quad
  \Et{\cdot}
  = \condE{\cdot}{\cF_{t - 1}}\,,
\end{align*}
be the conditional probability and expectation, given the history at the beginning of round $t$, $\cF_{t - 1}$, respectively. Let $G_t = \sum_{\ell = 1}^{t - 1} X_\ell X_\ell\T$ be the \emph{unweighted Hessian} in round $t$ and $\Delta_{\max} = \max_{i \in [K]} \Delta_i$ be the maximum regret.

To argue that $\bar{\theta}_t \to \theta_\ast$, we define
\begin{align}
  E_{1, t}
  = \set{\forall i \in [K]:
  \abs{x_i\T \bar{\theta}_t - x_i\T \theta_\ast} \leq c_1 \normGt{x_i}}\,,
  \label{eq:theta bar is close}
\end{align}
the event that $x_i\T \bar{\theta}_t$ and $x_i\T \theta_\ast$ are \say{close} for all arms $i$ in round $t$, where $c_1 > 0$ is tuned later such that event $E_{1, t}$ is likely. Specifically, let $\bar{E}_{1, t}$ be the complement of $E_{1, t}$. Then we set $c_1$ such that $\prob{\bar{E}_{1, t}} = O(1 / n)$.

The upper bound on $\prob{\bar{E}_{1, t}}$ is motivated by Lemma 3 in \citet{li17provably}. We reprove the lemma since it contains a subtle error. In particular, the proof that $\normw{\bar{\theta}_t - \theta_\ast}{2} \leq 1$ holds with a high probability assumes that the agent does not act adaptively up to round $t$, which it clearly \emph{does} for any $t > \tau$.

To argue that $\tilde{\theta}_t \to \bar{\theta}_t$, we define
\begin{align}
  E_{2, t}
  = \set{\forall i \in [K]:
  \abs{x_i\T \tilde{\theta}_t - x_i\T \bar{\theta}_t} \leq c_2 \normGt{x_i}}\,,
  \label{eq:theta tilde is close}
\end{align}
the event that $x_i\T \tilde{\theta}_t$ and $x_i\T \bar{\theta}_t$ are \say{close} for all arms $i$ in round $t$, where $c_2 > 0$ is tuned later such that event $E_{2, t}$ is likely given any past. Specifically, let $\bar{E}_{2, t}$ be the complement of $E_{2, t}$. Then we set $c_2$ such that $\probt{\bar{E}_{2, t}} = O(1 / n)$. This part of the analysis relies on the properties of our perturbations and is novel.

Finally, to argue that $\tilde{\theta}_t$ is sufficiently optimistic given any past, we define event
\begin{align}
  E_{3, t}
  = \set{x_1\T \tilde{\theta}_t - x_1\T \bar{\theta}_t > c_1 \normGt{x_1}}.
  \label{eq:theta tilde is optimistic}
\end{align}
To obtain $\probt{E_{3, t}} = O(1)$, we set parameter $a$ in \eqref{eq:glm-tsl} and \eqref{eq:glm-fpl}. This part of the analysis relies on the properties of our perturbations and is novel.

Our analysis is sufficiently general, so that it can be used to analyze different randomized algorithms. To show this, we use it to analyze both \glmtsl and \glmfpl. The central part of the analysis is an upper bound on the expected per-round regret of any randomized algorithm whose perturbed solution in round $t$ is a function of its history. The corresponding lemma is stated below.

\begin{restatable}[]{lemma}{perroundregret}
\label{lem:per-round regret} Let $p_2 \geq \probt{\bar{E}_{2, t}}$, $p_3 \leq \probt{E_{3, t}}$, and $p_3 > p_2$. Then on event $E_{1, t}$,
\begin{align*}
  \Et{\Delta_{I_t}}
  \leq {} & \dot{\mu}_{\max} (c_1 + c_2)
  \left(1 + \frac{2}{p_3 - p_2}\right) \times {} \\
  & \Et{\normGt{x_{I_t}}} + \Delta_{\max} \, p_2\,.
\end{align*}
\end{restatable}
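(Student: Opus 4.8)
The plan is to condition on the history $\cF_{t-1}$ (so everything is deterministic except the fresh randomization in $\tilde\theta_t$) and decompose $\Et{\Delta_{I_t}}$ according to whether the "good" event $E_{2,t}$ holds. On $\bar E_{2,t}$, the regret is at most $\Delta_{\max}$, contributing $\Delta_{\max}\,\probt{\bar E_{2,t}} \le \Delta_{\max}\, p_2$. So the work is to control $\Et{\Delta_{I_t}\I{E_{2,t}}}$, which I will bound by $\dot\mu_{\max}(c_1+c_2)\bigl(1+\tfrac{2}{p_3-p_2}\bigr)\Et{\normGt{x_{I_t}}}$.

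The core idea, as in Thompson-sampling analyses (Agrawal--Goyal, Russo--Van Roy, Abeille--Lazaric), is to compare the regret of the pulled arm to the "optimism gap" of the optimal arm. Since $I_t = \argmax_i x_i\T\tilde\theta_t$, we have $x_{I_t}\T\tilde\theta_t \ge x_1\T\tilde\theta_t$, hence on $E_{1,t}\cap E_{2,t}$,
\begin{align*}
  \Delta_{I_t} = x_1\T\theta_\ast - x_{I_t}\T\theta_\ast
  \le \bigl(x_1\T\theta_\ast - x_1\T\tilde\theta_t\bigr)
     + \bigl(x_{I_t}\T\tilde\theta_t - x_{I_t}\T\theta_\ast\bigr).
\end{align*}
The second bracket is at most $x_{I_t}\T\tilde\theta_t - x_{I_t}\T\bar\theta_t + x_{I_t}\T\bar\theta_t - x_{I_t}\T\theta_\ast \le (c_1+c_2)\normGt{x_{I_t}}$ on $E_{1,t}\cap E_{2,t}$, using the triangle inequality together with the defining inequalities of $E_{1,t}$ and $E_{2,t}$. (I will also need to pass from $\normGt{x_{I_t}}$ to something involving $\dot\mu_{\max}$; here I use $H_t \preceq \dot\mu_{\max} G_t$, so $\normw{x}{H_t^{-1}} \ge \dot\mu_{\max}^{-1/2}\normGt{x}$ — more carefully, I will simply keep $\normGt{\cdot}$ and insert the factor $\dot\mu_{\max}$ at the appropriate comparison step, matching the statement.) For the first bracket — the "under-estimation of the optimal arm" term — the standard device is that, by definition of $I_t$, the event $\{x_1\T\tilde\theta_t - x_1\T\bar\theta_t > c_1\normGt{x_1}\}=E_{3,t}$ forces $x_{I_t}\T\tilde\theta_t - x_{I_t}\T\bar\theta_t$ to be at least as large, and I will use a standard conditioning/resampling argument: conditionally on $\cF_{t-1}$, the probability of picking an arm with "enough optimism" is at least $p_3$, and on $E_{2,t}$ such an arm has $x_1\T\theta_\ast - x_1\T\tilde\theta_t$ controlled; combining with $\probt{\bar E_{2,t}}\le p_2$ gives, after rearranging, a bound on $x_1\T\theta_\ast - x_1\T\tilde\theta_t$ of the form $\frac{1}{p_3-p_2}$ times an expected $\normGt{}$ term. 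This is exactly where the $\bigl(1+\tfrac{2}{p_3-p_2}\bigr)$ factor and the hypothesis $p_3>p_2$ enter.

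Concretely, the key inequality to establish is: on $E_{1,t}$,
\begin{align*}
  \Et{\,x_1\T\theta_\ast - x_1\T\tilde\theta_t \mid E_{2,t}}
  \le \dot\mu_{\max}(c_1+c_2)\,\frac{2}{p_3-p_2}\,\Et{\normGt{x_{I_t}} \mid \text{(suitable event)}},
\end{align*}
obtained by noting $x_1\T\theta_\ast - x_1\T\tilde\theta_t \le c_1\normGt{x_1} + (x_1\T\bar\theta_t - x_1\T\tilde\theta_t)$ on $E_{1,t}$, bounding $c_1\normGt{x_1}$ by the "optimism is possible" comparison, and converting the negative-optimism tail into the pulled-arm norm via: $\Et{\normGt{x_{I_t}}} \ge \Et{\normGt{x_{I_t}}\I{E_{3,t}}} \ge (\text{something})\cdot\probt{E_{3,t}} \ge (\text{something})\cdot p_3$, using that on $E_{3,t}$ the pulled arm inherits the optimism of arm $1$. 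Then I add the two pieces, take $\Et{\cdot}$, and collect terms; the $p_2$-correction from intersecting with $E_{2,t}$ rather than working unconditionally accounts for the difference between $p_3$ and $p_3-p_2$ in the denominator.

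The main obstacle I anticipate is the middle step: cleanly transferring "arm $1$ is under-estimated" into "the pulled arm has a large $\normGt{}$ value in expectation," because this requires a careful conditional argument — one must compare the law of $\normGt{x_{I_t}}$ under $\cF_{t-1}$ to its law restricted to the optimism event $E_{3,t}$, and control the loss incurred when $\bar E_{2,t}$ occurs (which is why $p_2$ is subtracted from $p_3$ and why $p_3>p_2$ is assumed). Making the two conditional expectations of $\normGt{x_{I_t}}$ in the different events comparable — so that the final bound is stated purely in terms of the unconditional $\Et{\normGt{x_{I_t}}}$ — is the delicate bookkeeping that produces the constant $2$ and the additive $\dot\mu_{\max}(c_1+c_2)$ term outside the $\tfrac{2}{p_3-p_2}$ factor. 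Everything else (the triangle-inequality splits, the $E_{1,t}$/$E_{2,t}$ substitutions, the $\Delta_{\max}\,p_2$ fallback on $\bar E_{2,t}$) is routine.
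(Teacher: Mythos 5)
Your outer structure matches the paper (split on $E_{2,t}$ with the $\Delta_{\max}\,p_2$ fallback, use $E_{1,t}$ and $E_{2,t}$ to bound the pulled arm's term by $(c_1+c_2)\normGt{x_{I_t}}$, and get $p_3-p_2$ by intersecting optimism with $E_{2,t}$), but the central mechanism is missing and the placeholder you put in its place does not hold. You decompose through arm $1$, so you must bound the under-estimation term $x_1\T\theta_\ast - x_1\T\tilde{\theta}_t$, and your plan for this is ``$\Et{\normGt{x_{I_t}}} \ge \Et{\normGt{x_{I_t}}\I{E_{3,t}}} \ge (\text{something})\cdot\probt{E_{3,t}}$, using that on $E_{3,t}$ the pulled arm inherits the optimism of arm $1$.'' Neither piece is available: $E_{3,t}$ says nothing about the deviation $x_{I_t}\T\tilde{\theta}_t - x_{I_t}\T\bar{\theta}_t$ of the pulled arm, and there is no deterministic lower bound on $\normGt{x_{I_t}}$ on $E_{3,t}$ that would let you extract a useful ``something.'' More fundamentally, the optimism-probability argument can only deliver an upper bound on the norm of a \emph{history-measurable} anchor arm whose norm is minimal over the set of arms that can possibly be pulled under optimism; it cannot deliver an upper bound on $\normGt{x_1}$, which is what your arm-$1$ decomposition requires. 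The paper resolves exactly this by partitioning arms into undersampled $\bar{S}_t = \set{i : (c_1+c_2)\normGt{x_i} \ge x_1\T\theta_\ast - x_i\T\theta_\ast}$ and sufficiently sampled arms, taking $J_t = \argmin_{i \in \bar{S}_t}\normGt{x_i}$ (which is $\cF_{t-1}$-measurable), routing the regret through $J_t$ via $\tilde{\Delta}_{I_t} = \tilde{\Delta}_{J_t} + (x_{J_t}-x_{I_t})\T\theta_\ast$ together with $x_{J_t}\T\tilde{\theta}_t \le x_{I_t}\T\tilde{\theta}_t$, and then showing that on $E_{1,t}$, optimism intersected with $E_{2,t}$ forces $I_t \in \bar{S}_t$, whence $\probt{I_t\in\bar{S}_t}\ge p_3-p_2$ and $\normGt{x_{J_t}} \le \Et{\normGt{x_{I_t}}}/(p_3-p_2)$. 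Without this (or an equivalent) device, the step you yourself flag as ``delicate bookkeeping'' is precisely where the argument breaks.

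A secondary but concrete error: you misattribute the $\dot{\mu}_{\max}$ factor. Your first display writes $\Delta_{I_t} = x_1\T\theta_\ast - x_{I_t}\T\theta_\ast$, conflating the reward gap $\Delta_i = \mu(x_1\T\theta_\ast)-\mu(x_i\T\theta_\ast)$ with the linear-predictor gap $\tilde{\Delta}_i = x_1\T\theta_\ast - x_i\T\theta_\ast$, and you then say $\dot{\mu}_{\max}$ will enter via $H_t \preceq \dot{\mu}_{\max} G_t$. In this lemma it enters simply through $\Delta_i \le \dot{\mu}_{\max}\,\tilde{\Delta}_i$ (the mean value theorem applied to $\mu$); the relation $H_t \preceq \dot{\mu}_{\max} G_t$ is used later, in the concentration/anti-concentration lemma for \glmtsl, not here.
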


The hardest part in the analyses of \glmtsl and \glmfpl is to bound $p_2$ and $p_3$ in \cref{lem:per-round regret}.

\subsection{Analysis of \glmtsl}
\label{sec:glm-tsl analysis}

Now we are ready to analyze \glmtsl and \glmfpl. The regret bound of \glmtsl is stated below.

\begin{restatable}[]{theorem}{glmtslregretbound}
\label{thm:glm-tsl upper bound} The $n$-round regret of \glmtsl is bounded as
\begin{align*}
  R(n)
  \leq {} & \dot{\mu}_{\max} (c_1 + c_2)
  \left(1 + \frac{2}{0.15 - 1 / n}\right) \times {} \\
  & \sqrt{2 d n \log(2 n / d)} + (\tau + 3) \Delta_{\max}\,,
\end{align*}
where
\begin{align*}
  a
  & = c_1 \sqrt{\dot{\mu}_{\max}}\,, \\
  c_1
  & = \sigma \dot{\mu}_{\min}^{-1}
  \sqrt{d \log(n / d) + 2 \log n}\,, \\
  c_2
  & = c_1 \sqrt{2 \dot{\mu}_{\min}^{-1} \, \dot{\mu}_{\max} \log(K n)}\,,
\end{align*}
and the number of exploration rounds $\tau$ satisfies
\begin{align*}
  \lambda_{\min}(G_\tau)
  \geq \max \set{\sigma^2 \dot{\mu}_{\min}^{-2} (d \log(n / d) + 2 \log n), \, 1}\,.
\end{align*}
\end{restatable}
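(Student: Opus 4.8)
The plan is to reduce the theorem to \cref{lem:per-round regret} by exhibiting concrete values of $p_2$ and $p_3$ for \glmtsl, and then to sum the per-round bound over $t$. Write $R(n) = \E{\sum_{t = 1}^n \Delta_{I_t}}$. The first $\tau$ exploration rounds of \cref{alg:randomized mle} contribute at most $\tau \Delta_{\max}$, so fix a round $t > \tau$. Since $\bar\theta_t$ and $G_t$ are $\cF_{t-1}$-measurable, the event $E_{1,t}$ is $\cF_{t-1}$-measurable, hence
\begin{align*}
  \E{\Delta_{I_t}}
  \le \E{\I{E_{1,t}} \, \Et{\Delta_{I_t}}} + \Delta_{\max} \, \prob{\bar E_{1,t}}\,,
\end{align*}
and on $E_{1,t}$ we may invoke \cref{lem:per-round regret}. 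What remains is (i) a value $p_2$ with $\probt{\bar E_{2,t}} \le p_2$, (ii) a value $p_3$ with $\probt{E_{3,t}} \ge p_3$ and $p_3 > p_2$, and (iii) a bound on $\prob{\bar E_{1,t}}$. I will argue $p_2 = 1/n$ (up to constants), $p_3 = 0.15$, and $\prob{\bar E_{1,t}} = O(1/n)$ — precisely what the constants in the theorem encode.

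For (i) and (ii), the key point is that under \eqref{eq:glm-tsl} the conditional law of $\tilde\theta_t - \bar\theta_t$ given $\cF_{t-1}$ is $\mathcal{N}(\mathbf{0}, a^2 H_t^{-1})$, so for each arm $i$ the scalar $x_i\T(\tilde\theta_t - \bar\theta_t)$ is mean-zero Gaussian with standard deviation $a \normw{x_i}{H_t^{-1}}$. On the conditionally deterministic event that $\bar\theta_t$ lies in the unit ball about $\theta_\ast$, every $\dot{\mu}(X_\ell\T \bar\theta_t) \in [\dot{\mu}_{\min}, \dot{\mu}_{\max}]$ because $\normw{X_\ell}{2} \le 1$, hence $\dot{\mu}_{\min} G_t \preceq H_t \preceq \dot{\mu}_{\max} G_t$ and
\begin{align*}
  \dot{\mu}_{\max}^{-1} \normGt{x_i}^2
  \le \normw{x_i}{H_t^{-1}}^2
  \le \dot{\mu}_{\min}^{-1} \normGt{x_i}^2\,.
\end{align*}
For (i), a Gaussian tail bound and the right-hand sandwich give $\probt{\abs{x_i\T(\tilde\theta_t - \bar\theta_t)} > c_2 \normGt{x_i}} \le 2 \exp(- c_2^2 \dot{\mu}_{\min} / (2 a^2))$ for each $i$; substituting $a = c_1 \sqrt{\dot{\mu}_{\max}}$ and $c_2 = c_1 \sqrt{2 \dot{\mu}_{\min}^{-1} \dot{\mu}_{\max} \log(Kn)}$ makes the exponent $\log(Kn)$, and a union bound over the $K$ arms yields $\probt{\bar E_{2,t}} = O(1/n)$. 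For (ii), the left-hand sandwich shows the standard deviation of $x_1\T(\tilde\theta_t - \bar\theta_t)$ is at least $a \dot{\mu}_{\max}^{-1/2} \normGt{x_1} = c_1 \normGt{x_1}$ exactly when $a = c_1 \sqrt{\dot{\mu}_{\max}}$; since a univariate Gaussian exceeds one standard deviation with probability $1 - \Phi(1) > 0.15$ and $\prob{\mathcal{N}(0, \sigma^2) > c}$ decreases in $c$, we get $\probt{E_{3,t}} \ge 0.15$. This is why $a$ is calibrated to $c_1 \sqrt{\dot{\mu}_{\max}}$: it is the largest $a$ keeping $p_2 = O(1/n)$ over all arms and the smallest keeping $p_3$ an absolute constant.

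For (iii), the roles of $c_1$ and of the lower bound on $\lambda_{\min}(G_\tau)$ are to make $\prob{\bar E_{1,t}} = O(1/n)$. This is a two-step argument: first, $\lambda_{\min}(G_\tau) \ge \sigma^2 \dot{\mu}_{\min}^{-2}(d \log(n/d) + 2 \log n)$ forces $\normw{\bar\theta_t - \theta_\ast}{2} \le 1$ with probability $1 - O(1/n)$ — here we use the corrected form of Lemma~3 of \citet{li17provably}, which does not assume the agent is non-adaptive after round $\tau$; second, conditioned on that ball event, \cref{lem:workhorse} with $\cD_1$ the observed history and $\cD_2 = \set{(X_\ell, \mu(X_\ell\T \theta_\ast))}_{\ell = 1}^{t-1}$ writes $\bar\theta_t - \theta_\ast = (\nabla^2 L(\cD_1; \theta'))^{-1} \sum_\ell (Y_\ell - \mu(X_\ell\T \theta_\ast)) X_\ell$, and Cauchy--Schwarz, the Hessian sandwich applied twice, and a self-normalized sub-Gaussian tail bound on the noise sum (in the $G_t^{-1}$ norm) give $\abs{x_i\T(\bar\theta_t - \theta_\ast)} \le c_1 \normGt{x_i}$ for all $i$ with probability $1 - O(1/n)$. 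To assemble, \cref{lem:per-round regret} with $p_3 - p_2 = 0.15 - 1/n$ gives, for $t > \tau$,
\begin{align*}
  \E{\Delta_{I_t}}
  \le \dot{\mu}_{\max} (c_1 + c_2) \left(1 + \frac{2}{0.15 - 1/n}\right) \E{\normGt{X_t}} + \Delta_{\max} \big(p_2 + \prob{\bar E_{1,t}}\big)\,,
\end{align*}
using $X_t = x_{I_t}$, the tower rule, and $\I{E_{1,t}} \Et{\normGt{x_{I_t}}} = \Et{\I{E_{1,t}} \normGt{X_t}} \le \Et{\normGt{X_t}}$. Summing over $t = \tau+1, \dots, n$, adding $\tau \Delta_{\max}$, controlling $\sum_{t = \tau+1}^n (p_2 + \prob{\bar E_{1,t}})$ so the $\Delta_{\max}$ terms are lower order, and bounding $\sum_{t = \tau+1}^n \normGt{X_t} \le \sqrt{2 d n \log(2n/d)}$ by the standard elliptical potential argument — valid because $G_t \succeq G_\tau \succeq I$ for $t > \tau$, so $\normGt{X_t} \le \normw{X_t}{2} \le 1$ — yields the stated bound.

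The main obstacle is two-fold. The delicate quantitative step is the anti-concentration claim $\probt{E_{3,t}} \ge 0.15$ of part (ii): it hinges on the exact choice $a = c_1 \sqrt{\dot{\mu}_{\max}}$ together with the two-sided sandwich $\dot{\mu}_{\min} G_t \preceq H_t \preceq \dot{\mu}_{\max} G_t$, so the perturbation has enough variance in direction $x_1$ to be optimistic with constant probability; any looser control of $\dot{\mu}$ inside the unit ball would destroy the constant, which is why the initial exploration and the definition of $\dot{\mu}_{\min}$ are essential. The delicate structural step is part (iii): proving $\prob{\bar E_{1,t}} = O(1/n)$ for the adaptively-acting agent — exactly where the original Lemma~3 of \citet{li17provably} is flawed. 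Everything else (the Gaussian tail bound for $p_2$, the elliptical potential sum, and the bookkeeping through the tower rule) is routine given \cref{lem:per-round regret} and \cref{lem:workhorse}.
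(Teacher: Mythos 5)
Your proposal is correct and follows essentially the same route as the paper: the same event decomposition ($E_{1,t}$, $E_{2,t}$, $E_{3,t}$ plus the unit-ball event), the same use of \cref{lem:per-round regret} with $p_2 = O(1/n)$ and $p_3 = 0.15$ via the Gaussian tail/anti-concentration sandwich $\dot{\mu}_{\min} G_t \preceq H_t \preceq \dot{\mu}_{\max} G_t$, the same self-normalized argument for $\prob{\bar{E}_{1,t}}$, and the same elliptical-potential bound at the end. The only slip is bookkeeping: your displayed per-round inequality should also carry the unit-ball event $\set{\normw{\bar{\theta}_t - \theta_\ast}{2} \leq 1}$ (the paper's $E_{4,t}$, with its $O(1/n)$ failure probability added to the $\Delta_{\max}$ term), since your bound $\probt{\bar{E}_{2,t}} \leq 1/n$ is established only on that event and $E_{1,t}$ alone does not imply it — a fix you already have in hand from your part (iii).
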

\begin{proof}
The claim is proved in \cref{sec:regret bounds}.

The proof has three key steps. First, we bound the probability of event $\bar{E}_{1, t}$ from above (\cref{lem:theta bar concentration} in \cref{sec:technical lemmas}). Second, we choose parameter $a$ such that the probabilities of events $\bar{E}_{2, t}$ and $E_{3, t}$ are bounded for any history $\cF_{t - 1}$ (\cref{lem:glm-tsl}). Finally, we set the number of initial exploration rounds $\tau$ such that $\normw{\bar{\theta}_t - \theta_\ast}{2} \leq 1$ is likely in any round $t \geq \tau$ (\cref{lem:glm-tsl unit ball} in \cref{sec:technical lemmas}).
\end{proof}

The above regret bound is $\tilde{O}(d \sqrt{n \log K})$. We derive the key concentration and anti-concentration lemma below.

\begin{lemma}
\label{lem:glm-tsl} Let
\begin{align*}
  a
  = c_1 \sqrt{\dot{\mu}_{\max}}\,, \quad
  c_2
  = c_1 \sqrt{2 \dot{\mu}_{\min}^{-1} \, \dot{\mu}_{\max} \log(K n)}\,.
\end{align*}
Let $E = \set{\normw{\bar{\theta}_t - \theta_\ast}{2} \leq 1}$. Then $\probt{\bar{E}_{2, t}} \leq 1 / n$ holds on event $E$ and $\probt{E_{3, t}} \geq 0.15$.
\end{lemma}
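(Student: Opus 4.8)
The plan is to carry out the whole argument under the conditional law $\probt{\cdot}$. Since $\bar\theta_t$ and $H_t$ are $\cF_{t-1}$-measurable and the sampling noise in \eqref{eq:glm-tsl} is drawn independently of $\cF_{t-1}$, conditionally on $\cF_{t-1}$ we have $\tilde\theta_t \sim \mathcal{N}(\bar\theta_t, a^2 H_t^{-1})$ with $\bar\theta_t$ and $H_t$ fixed. Hence, for any fixed $x$, the scalar $x\T\tilde\theta_t - x\T\bar\theta_t$ is centered Gaussian under $\probt{\cdot}$ with variance $a^2 x\T H_t^{-1} x$, and both claims reduce to controlling this variance in terms of $\normGt{x}^2$ and then applying sharp Gaussian tail / anti-tail estimates.

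The key step is to sandwich $H_t$ between multiples of $G_t$. Since $\normw{X_\ell}{2} \le 1$ for every pulled arm, the definition of $\dot\mu_{\max}$ gives $\dot\mu(X_\ell\T\bar\theta_t) \le \dot\mu_{\max}$ unconditionally, so $H_t \preceq \dot\mu_{\max} G_t$ and therefore $H_t^{-1} \succeq \dot\mu_{\max}^{-1} G_t^{-1}$ (for $t > \tau$, $G_t$ is invertible by the exploration condition $\lambda_{\min}(G_\tau) \ge 1$). On event $E = \set{\normw{\bar\theta_t - \theta_\ast}{2} \le 1}$, the definition of $\dot\mu_{\min}$ additionally gives $\dot\mu(X_\ell\T\bar\theta_t) \ge \dot\mu_{\min}$, so $H_t \succeq \dot\mu_{\min} G_t$ and $H_t^{-1} \preceq \dot\mu_{\min}^{-1} G_t^{-1}$. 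Combining with the variance expression: on $E$, $a^2 x_i\T H_t^{-1} x_i \le a^2 \dot\mu_{\min}^{-1} \normGt{x_i}^2$ for all $i$; and unconditionally, $a^2 x_1\T H_t^{-1} x_1 \ge a^2 \dot\mu_{\max}^{-1} \normGt{x_1}^2$.

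For the concentration claim, union-bound over the $K$ arms and apply $\prob{\abs{\mathcal{N}(0, s^2)} \ge u} \le \exp(-u^2 / (2 s^2))$ with $s^2 = a^2 x_i\T H_t^{-1} x_i$ and $u = c_2 \normGt{x_i}$. Using the upper sandwich valid on $E$, the exponent is at least $c_2^2 \dot\mu_{\min} / (2 a^2)$, which equals $\log(K n)$ once $a^2 = c_1^2 \dot\mu_{\max}$ and $c_2^2 = 2 c_1^2 \dot\mu_{\min}^{-1} \dot\mu_{\max} \log(K n)$ are substituted; hence each arm contributes at most $1 / (K n)$ and $\probt{\bar E_{2, t}} \le 1 / n$ on $E$. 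For the anti-concentration claim, standardize: $\probt{x_1\T\tilde\theta_t - x_1\T\bar\theta_t > c_1 \normGt{x_1}} = \prob{\mathcal{N}(0, 1) > c_1 \normGt{x_1} / \sqrt{a^2 x_1\T H_t^{-1} x_1}}$. Since $u \mapsto \prob{\mathcal{N}(0,1) > u}$ is decreasing and the variance lower bound gives $c_1 \normGt{x_1} / \sqrt{a^2 x_1\T H_t^{-1} x_1} \le c_1 \sqrt{\dot\mu_{\max}} / a = 1$, this probability is at least $\prob{\mathcal{N}(0,1) > 1} = 1 - \Phi(1) \approx 0.1587 \ge 0.15$.

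The main obstacle, and where the choices of $a$ and $c_2$ are genuinely used, is the anti-concentration step: it must hold for an arbitrary history $\cF_{t-1}$ with no access to event $E$, so the only available handle on $H_t^{-1}$ is the \emph{unconditional} bound $H_t \preceq \dot\mu_{\max} G_t$ — it is exactly this bound, together with $a = c_1 \sqrt{\dot\mu_{\max}}$, that makes the standardized deviation threshold equal $1$. The competing requirement (concentration) pushes $a$ down, since a larger $a$ forces a larger $c_2$, so the tension between the two claims is resolved only by the stated values. One degeneracy to note: for $t > \tau$ we have $G_t \succ 0$, so $H_t$ is invertible and the Laplace-approximation Gaussian is well defined; the standardization in the anti-concentration step is valid as long as $\normGt{x_1} > 0$, i.e. $x_1 \ne 0$, and the trivial case $x_1 = 0$ can be handled separately.
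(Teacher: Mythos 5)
Your proposal is correct and follows essentially the same route as the paper: condition on $\cF_{t-1}$ so that $x\T\tilde{\theta}_t - x\T\bar{\theta}_t$ is centered Gaussian with variance $a^2\normw{x}{H_t^{-1}}^2$, sandwich $H_t$ between $\dot{\mu}_{\min}G_t$ (on $E$) and $\dot{\mu}_{\max}G_t$ (unconditionally), and then apply the Gaussian tail bound with a union bound over arms for $\bar{E}_{2,t}$ and the anti-tail bound $1-\Phi(1)\geq 0.15$ for $E_{3,t}$. The only additions beyond the paper's argument are minor bookkeeping (the explicit two-sided tail and the degenerate case $x_1=0$), which do not change the substance.
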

\begin{proof}
By the design of \glmtsl in \eqref{eq:glm-tsl},
\begin{align*}
  x\T \tilde{\theta}_t - x\T \bar{\theta}_t
  \sim \mathcal{N}(0, a^2 \normw{x}{H_t^{-1}}^2)
\end{align*}
for any vector $x \in \realset^d$, where matrix $H_t$ is defined in \eqref{eq:mle}. Let $U = x\T \tilde{\theta}_t - x\T \bar{\theta}_t$. Because $U \sim \mathcal{N}(0, a^2 \normw{x}{H_t^{-1}}^2)$ is a normal random variable, we have that
\begin{align*}
  \probt{U \geq a \normw{x}{H_t^{-1}}}
  & \geq 0.15\,, \\
  \probt{U \geq c \normw{x}{H_t^{-1}}}
  & \leq \exp\left[- \frac{c^2}{2 a^2}\right]\,,
\end{align*}
for any $c > 0$.

Now note that $H_t \preceq \dot{\mu}_{\max} G_t$. As a result,
\begin{align*}
  0.15
  & \leq \probt{U \geq a \normw{x}{H_t^{-1}}} \\
  & \leq \probt{U \geq a \sqrt{\dot{\mu}_{\max}^{-1}} \normGt{x}}\,.
\end{align*}
For $a = c_1 \sqrt{\dot{\mu}_{\max}}$ and $x = x_1$, we get that event $E_{3, t}$ in \eqref{eq:theta tilde is optimistic} occurs with probability at least $0.15$.

Moreover, $H_t \succeq \dot{\mu}_{\min} G_t$ on event $E$, which yields
\begin{align*}
  \exp\left[- \frac{c^2}{2 a^2}\right]
  & \geq \probt{U \geq c \normw{x}{H_t^{-1}}} \\
  & \geq \probt{U \geq c \sqrt{\dot{\mu}_{\min}^{-1}} \normGt{x}}\,.
\end{align*}
For $c = a \sqrt{2 \log(K n)}$, $x = x_i$, and by the union bound over all $K$ arms, we get that event $\bar{E}_{2, t}$ in \eqref{eq:theta tilde is close} occurs with probability at most $1 / n$.
\end{proof}

\subsection{Analysis of \glmfpl}
\label{sec:glm-fpl analysis}

The regret bound of \glmfpl is stated below. The analysis assumes that all feature vectors $x_i$ have at most one non-zero entry. This assumption is discussed in \cref{sec:discussion}.

\begin{restatable}[]{theorem}{glmfplregretbound}
\label{thm:glm-fpl upper bound} The $n$-round regret of \glmfpl is bounded as
\begin{align*}
  R(n)
  \leq {} & \dot{\mu}_{\max} (c_1 + c_2)
  \left(1 + \frac{2}{0.15 - 2 / n}\right) \times {} \\
  & \sqrt{2 d n \log(2 n / d)} + (\tau + 4) \Delta_{\max}\,,
\end{align*}
where
\begin{align*}
  a
  & = c_1 \dot{\mu}_{\max}\,, \\
  c_1
  & = \sigma \dot{\mu}_{\min}^{-1}
  \sqrt{d \log(n / d) + 2 \log n}\,, \\
  c_2
  & = c_1 \dot{\mu}_{\min}^{-1} \, \dot{\mu}_{\max} \sqrt{2 \log(K n)}\,,
\end{align*}
and the number of exploration rounds $\tau$ satisfies
\begin{align*}
  \lambda_{\min}(G_\tau)
  \geq \max \{& 4 \sigma^2 \dot{\mu}_{\min}^{-2} (d \log(n / d) + 2 \log n), \\
  & 8 a^2 \dot{\mu}_{\min}^{-2} \log n, \, 1\}\,.
\end{align*}
\end{restatable}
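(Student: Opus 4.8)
The plan is to follow the proof of \cref{thm:glm-tsl upper bound} line for line, replacing only the concentration/anti-concentration input by one adapted to the perturbation in \eqref{eq:glm-fpl}. Three ingredients are needed. (i) $\prob{\bar{E}_{1, t}} = O(1/n)$ for $t > \tau$ with the stated $c_1$; since $\bar{\theta}_t$ is the \emph{unperturbed} MLE, this is exactly the concentration lemma already invoked for \glmtsl. (ii) The \glmfpl analogue of \cref{lem:glm-tsl}: on the event $E = \{\normw{\bar{\theta}_t - \theta_\ast}{2} \le 1,\ \normw{\tilde{\theta}_t - \theta_\ast}{2} \le 1\}$ one has $\probt{\bar{E}_{2, t}} \le 1/n$ and $\probt{E_{3, t}} \ge 0.15$ for $a = c_1 \dot{\mu}_{\max}$ and $c_2 = c_1 \dot{\mu}_{\min}^{-1}\dot{\mu}_{\max}\sqrt{2\log(Kn)}$. (iii) A choice of $\tau$ making $E$ hold with probability $1 - O(1/n)$ for every $t > \tau$. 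Granting (i)--(iii), the bound follows as for \glmtsl: \cref{lem:per-round regret} with $p_2 = 1/n$ and $p_3 = 0.15$ controls $\Et{\Delta_{I_t}}$ on the favorable events, $\sum_{t > \tau}\Et{\normGt{x_{I_t}}} \le \sqrt{2 d n \log(2n/d)}$ by the usual elliptical-potential (log-determinant) inequality --- valid because $\lambda_{\min}(G_\tau) \ge 1$ forces $\normGt{x_{I_t}} \le 1$ --- and the exploration rounds together with the rounds on which $E_{1, t}$ or $E$ fails contribute the $(4n + \tau)\Delta_{\max}$ term.

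The heart of the argument is ingredient (ii), and the key device is \cref{lem:workhorse} applied with $\cD_1 = \set{(X_\ell, Y_\ell)}_{\ell = 1}^{t - 1}$ and $\cD_2 = \set{(X_\ell, Y_\ell + Z_\ell)}_{\ell = 1}^{t - 1}$, which gives $\sum_{\ell = 1}^{t-1} Z_\ell X_\ell = \nabla^2 L(\cD_1; \theta')(\tilde{\theta}_t - \bar{\theta}_t)$ for some $\theta'$ on the segment between $\bar{\theta}_t$ and $\tilde{\theta}_t$. Because every $X_\ell$ has at most one non-zero coordinate, both $G_t = \sum_\ell X_\ell X_\ell\T$ and $\nabla^2 L(\cD_1; \theta') = \sum_\ell \dot{\mu}(X_\ell\T\theta') X_\ell X_\ell\T$ are diagonal, so the problem decouples coordinatewise: writing $j$ for the coordinate supporting $x_i$, $S_j = \sum_\ell X_{\ell, j}^2$, $W_j = \sum_\ell Z_\ell X_{\ell, j} \sim \mathcal{N}(0, a^2 S_j)$, and $D_j = \sum_\ell \dot{\mu}(X_{\ell, j}\theta'_j) X_{\ell, j}^2$, one obtains $x_i\T(\tilde{\theta}_t - \bar{\theta}_t) = x_{i, j} W_j / D_j$, with $\dot{\mu}_{\min} S_j \le D_j \le \dot{\mu}_{\max} S_j$ (the lower bound on $E$, the upper bound unconditionally). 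Since $\normGt{x_i} = \abs{x_{i, j}} / \sqrt{S_j}$: for $\bar{E}_{2, t}$ in \eqref{eq:theta tilde is close} one bounds $\abs{x_i\T(\tilde{\theta}_t - \bar{\theta}_t)} \le \abs{x_{i, j}}\abs{W_j} / (\dot{\mu}_{\min} S_j)$, and the Gaussian tail $\abs{W_j} \le a\sqrt{S_j}\sqrt{2\log(Kn)}$ together with a union bound over the $K$ arms yields $\probt{\bar{E}_{2, t}} \le 1/n$ for the stated $c_2$; for $E_{3, t}$ in \eqref{eq:theta tilde is optimistic}, conditioning on $\sgn(x_{1, j}) W_j > c_1 \dot{\mu}_{\max}\sqrt{S_j}$ forces $x_1\T(\tilde{\theta}_t - \bar{\theta}_t) = \abs{x_{1, j}}\abs{W_j} / D_j \ge \abs{x_{1, j}}\abs{W_j} / (\dot{\mu}_{\max} S_j) > c_1 \normGt{x_1}$, and since $\sgn(x_{1, j}) W_j \sim \mathcal{N}(0, a^2 S_j)$ with $a = c_1 \dot{\mu}_{\max}$ this event has probability $\prob{\mathcal{N}(0, 1) > 1} \ge 0.15$.

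For ingredient (iii), the first term in the $\lambda_{\min}(G_\tau)$ condition is (up to constants) the one from the \glmtsl analysis and makes $\normw{\bar{\theta}_t - \theta_\ast}{2} \le 1$ likely for $t > \tau$; the term $8 a^2 \dot{\mu}_{\min}^{-2}\log n$ is new and makes $\normw{\tilde{\theta}_t - \theta_\ast}{2} \le 1$ likely. Rather than bounding $\normw{\tilde{\theta}_t - \bar{\theta}_t}{2}$ directly (which is circular, since the useful bound $D_j \ge \dot{\mu}_{\min} S_j$ already presupposes $\tilde{\theta}_t$ near $\theta_\ast$), I would treat $\tilde{\theta}_t$ as the MLE on the data $\set{(X_\ell, Y_\ell + Z_\ell)}$, whose effective noise $\eta_\ell + Z_\ell$ is $(\sigma^2 + a^2)$-sub-Gaussian, and reapply the same concentration lemma with $\sigma^2$ replaced by $\sigma^2 + a^2$; the constant $1$ in the $\max$ guarantees $\lambda_{\min}(G_\tau) \ge 1$ for the potential step.

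The main obstacle is precisely the circular dependence between the mean-value point $\theta'$ and the perturbation: $\theta'$ lies on the segment from $\bar{\theta}_t$ to $\tilde{\theta}_t$, and $\tilde{\theta}_t$ depends on $\set{Z_\ell}$, so $x_i\T(\tilde{\theta}_t - \bar{\theta}_t) = x_i\T \nabla^2 L(\cD_1; \theta')^{-1}\sum_\ell Z_\ell X_\ell$ is a function of the Gaussian perturbation \emph{whose own coefficient matrix is random}, and in general it is neither Gaussian nor sign-definite. The one-non-zero-entry assumption is exactly what removes this difficulty: it makes $\nabla^2 L(\cD_1; \theta')^{-1}$ diagonal, so $x_i\T(\tilde{\theta}_t - \bar{\theta}_t)$ collapses to the positive scalar multiple $1/D_j$ of the scalar Gaussian $x_{i, j} W_j$, which fixes its sign (needed for the anti-concentration of $E_{3, t}$) and sandwiches its magnitude between $\abs{x_{i, j}}\abs{W_j}/(\dot{\mu}_{\max} S_j)$ and $\abs{x_{i, j}}\abs{W_j}/(\dot{\mu}_{\min} S_j)$ (needed for the union bound controlling $\bar{E}_{2, t}$). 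Dropping the assumption destroys this decoupling, which is why a general \glmfpl bound remains out of reach here.
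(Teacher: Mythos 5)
Your ingredients (i) and (ii) are essentially the paper's own route: (i) is \cref{lem:theta bar concentration}, and your coordinatewise sign-and-sandwich argument under the one-non-zero-entry assumption is the same mechanism as \cref{lem:glm-fpl} (the paper phrases it through $U = x\T G_t^{-1} \tilde{H}_t (\tilde{\theta}_t - \bar{\theta}_t)$ and the diagonality of $G_t^{-1} \tilde{H}_t$; you phrase it per coordinate), and the decomposition via \cref{lem:per-round regret} plus the elliptical-potential step is identical. One bookkeeping issue in (ii): your conditioning event $E$ contains $\normw{\tilde{\theta}_t - \theta_\ast}{2} \leq 1$, which is \emph{not} $\cF_{t-1}$-measurable (it depends on the fresh $Z_\ell$), so \say{$\probt{\bar{E}_{2,t}} \leq 1/n$ on $E$} is not a statement \cref{lem:per-round regret} can consume: $p_2$ must dominate $\probt{\bar{E}_{2,t}}$ on an $\cF_{t-1}$-measurable good event. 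The paper instead conditions only on $\normw{\bar{\theta}_t - \theta_\ast}{2} \leq 1/2$, intersects with $E' = \set{\normw{\tilde{\theta}_t - \theta_\ast}{2} \leq 1}$ inside the conditional probability, and pays $\probt{\bar{E}'} \leq 1/n$; this is exactly why $p_2 = 2/n$ there and why the additive term is $(4n + \tau)\Delta_{\max}$.

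The substantive gap is ingredient (iii). The circularity you worry about is not present in the paper's \cref{lem:glm-fpl unit ball}: conditioned on $\cF_{t-1}$ and on $\normw{\bar{\theta}_t - \theta_\ast}{2} \leq 1/2$, it bounds $\normw{\tilde{\theta}_t - \bar{\theta}_t}{2} \leq 1/2$ by the Chen/Li-style MLE argument \emph{around $\bar{\theta}_t$}, whose only input is the minimum derivative over the $0.5$-ball centered at $\bar{\theta}_t$ (contained in the unit ball around $\theta_\ast$, hence $\geq \dot{\mu}_{\min}$); no prior localization of $\tilde{\theta}_t$ is assumed. Because the design is fixed given $\cF_{t-1}$ and $\sum_\ell X_\ell Z_\ell$ is exactly Gaussian there, this yields a \emph{conditional} bound $\probt{\bar{E}'} \leq 1/n$ under the mild requirement $\lambda_{\min}(G_t) \geq 8 a^2 \dot{\mu}_{\min}^{-2} \log n$. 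Your substitute --- lumping $\eta_\ell + Z_\ell$ into $(\sigma^2 + a^2)$-sub-Gaussian noise and rerunning the \cref{lem:glm-tsl unit ball} machinery --- fails on both counts: it produces only a marginal probability bound, not the conditional one needed for $p_2$ in \cref{lem:glm-fpl}; and it requires $\lambda_{\min}(G_\tau) \gtrsim (\sigma^2 + a^2)\dot{\mu}_{\min}^{-2}(d \log(n/d) + 2\log n)$, which, since $a^2 = \sigma^2 \dot{\mu}_{\min}^{-2} \dot{\mu}_{\max}^2 (d\log(n/d) + 2\log n)$, exceeds the theorem's stated threshold $8 a^2 \dot{\mu}_{\min}^{-2} \log n$ by roughly a factor of $d \log(n/d)/\log n$. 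So as written you would prove a weaker statement with a substantially longer forced-exploration phase, not the theorem with the stated choice of $\tau$; the fix is to argue as the paper does, conditionally on $\cF_{t-1}$, around $\bar{\theta}_t$ rather than around $\theta_\ast$.
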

\begin{proof}
The claim is proved in \cref{sec:regret bounds}.

The proof has three key steps. First, we bound the probability of event $\bar{E}_{1, t}$ from above (\cref{lem:theta bar concentration} in \cref{sec:technical lemmas}). Second, we choose parameter $a$ such that the probabilities of events $\bar{E}_{2, t}$ and $E_{3, t}$ are bounded for any history $\cF_{t - 1}$ (\cref{lem:glm-fpl}). Finally, we set the number of initial exploration rounds $\tau$ such that $\normw{\bar{\theta}_t - \theta_\ast}{2} \leq 1 / 2$ is likely and $\normw{\tilde{\theta}_t - \theta_\ast}{2} \leq 1$ is conditionally likely given $\cF_{t - 1}$, in any round $t \geq \tau$ (\cref{lem:glm-fpl unit ball} in \cref{sec:technical lemmas}).
\end{proof}

The above regret bound is also $\tilde{O}(d \sqrt{n \log K})$. The key concentration and anti-concentration lemma follows.

\begin{lemma}
\label{lem:glm-fpl} Let
\begin{align*}
  a
  = c_1 \dot{\mu}_{\max}\,, \quad
  c_2
  = c_1 \dot{\mu}_{\min}^{-1} \, \dot{\mu}_{\max} \sqrt{2 \log(K n)}\,.
\end{align*}
Let $E = \set{\normw{\bar{\theta}_t - \theta_\ast}{2} \leq 1 / 2}$, $E' = \{\normw{\tilde{\theta}_t - \theta_\ast}{2} \leq 1\}$, and $\probt{\bar{E}'} \leq 1 / n$ on event $E$. Then $\probt{\bar{E}_{2, t}} \leq 2 / n$ on event $E$ and $\probt{E_{3, t}} \geq 0.15$.
\end{lemma}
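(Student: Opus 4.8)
The plan is to follow the same two-part scheme as in the proof of \cref{lem:glm-tsl}: an anti-concentration argument for $\probt{E_{3, t}} \geq 0.15$ and a concentration argument for $\probt{\bar E_{2, t}} \leq 2/n$ on $E$. The new difficulty is that the perturbed estimate in \eqref{eq:glm-fpl} is only implicitly defined, so the first step is to linearize the map from the perturbations to $\tilde\theta_t - \bar\theta_t$. Applying \cref{lem:workhorse} with $\cD_1 = \set{(X_\ell, Y_\ell)}_{\ell = 1}^{t - 1}$ and $\cD_2 = \set{(X_\ell, Y_\ell + Z_\ell)}_{\ell = 1}^{t - 1}$ (same features) gives
\begin{align*}
  \sum_{\ell = 1}^{t - 1} Z_\ell X_\ell = H\,(\tilde\theta_t - \bar\theta_t)\,, \qquad
  H := \nabla^2 L(\cD_1; \theta') = \sum_{\ell = 1}^{t - 1} \dot{\mu}(X_\ell\T \theta') X_\ell X_\ell\T
\end{align*}
for some $\theta'$ on the segment between $\bar\theta_t$ and $\tilde\theta_t$, so $x_i\T(\tilde\theta_t - \bar\theta_t) = x_i\T H^{-1} \sum_\ell Z_\ell X_\ell$. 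Given $\cF_{t - 1}$ the vector $\sum_\ell Z_\ell X_\ell$ is $\mathcal{N}(\mathbf{0}, a^2 G_t)$, but $H$ depends on $\theta'$, hence on $\tilde\theta_t$, so this is not a clean Gaussian. This is where the feature assumption enters: with every $x_i$ supported on a single coordinate, both $G_t$ and $H$ are diagonal and each $Z_\ell$ affects only the unique coordinate on which $X_\ell$ is supported; writing an arm's feature as $s\,e_j$ for that coordinate $j$, one gets $x_i\T(\tilde\theta_t - \bar\theta_t) = s W_j / H_{jj}$, where $W_j$ collects the relevant entries of $\sum_\ell Z_\ell X_\ell$ and $V_j := W_j / \sqrt{(G_t)_{jj}} \sim \mathcal{N}(0, a^2)$ given $\cF_{t - 1}$ (with $(G_t)_{jj} > 0$ since $t > \tau$). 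The numerator now has a fixed conditional law regardless of how $H$ depends on the noise, so $H$ need only be controlled through deterministic eigenvalue sandwiches.

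I would treat $E_{3, t}$ first, as it needs neither $E$ nor $E'$. Because $\norm{x}_2 \le 1$ forces $\dot{\mu}(x\T \theta) \le \dot{\mu}_{\max}$ for all $\theta$, we have $0 < H_{jj} \le \dot{\mu}_{\max} (G_t)_{jj}$ for the coordinate $j$ with $x_1 = s_1 e_j$. Hence on the event $\set{\sgn(s_1) W_j > c_1 \dot{\mu}_{\max} \sqrt{(G_t)_{jj}}}$ (on which $s_1 W_j > 0$), $x_1\T(\tilde\theta_t - \bar\theta_t) = s_1 W_j / H_{jj} \ge s_1 W_j / (\dot{\mu}_{\max} (G_t)_{jj}) > c_1 \normGt{x_1}$, so $E_{3, t}$ holds; and by symmetry of $W_j$ this event has conditional probability $\probt{V_j > c_1 \dot{\mu}_{\max}}$, which for $a = c_1 \dot{\mu}_{\max}$ equals $\prob{\mathcal{N}(0, 1) > 1} > 0.15$.

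For $\bar E_{2, t}$ I would split $\bar E_{2, t} \subseteq (\bar E_{2, t} \cap E') \cup \bar E'$ and invoke the hypothesis $\probt{\bar E'} \le 1/n$. On $E \cap E'$ the point $\theta' = \alpha \bar\theta_t + (1 - \alpha) \tilde\theta_t$ is a convex combination of a point within $1/2$ and one within $1$ of $\theta_\ast$, hence lies in the unit ball around $\theta_\ast$, so $\dot{\mu}(X_\ell\T \theta') \ge \dot{\mu}_{\min}$ and $H_{jj} \ge \dot{\mu}_{\min} (G_t)_{jj}$; combined with the identity above this gives $\abs{x_i\T(\tilde\theta_t - \bar\theta_t)} \le \dot{\mu}_{\min}^{-1} \normGt{x_i} \abs{V_j}$ for every arm $i$ (with $j$ its coordinate), so $\bar E_{2, t} \cap E' \subseteq \set{\exists i \in [K]: \abs{V_j} > c_2 \dot{\mu}_{\min}}$. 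Since $c_2 \dot{\mu}_{\min} = a \sqrt{2 \log(K n)}$, the Gaussian tail $\probt{\abs{V_j} > c} \le \exp[- c^2 / (2 a^2)]$ gives $\probt{\abs{V_j} > c_2 \dot{\mu}_{\min}} \le 1/(K n)$, and a union bound over the $K$ arms yields $\probt{\bar E_{2, t} \cap E'} \le 1/n$; adding the $1/n$ from $\bar E'$ gives $\probt{\bar E_{2, t}} \le 2/n$ on $E$.

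The main obstacle is exactly this coupling between $H$ (through $\theta'$) and the perturbations: in \cref{lem:glm-tsl} the covariance $a^2 H_t^{-1}$ is $\cF_{t - 1}$-measurable and both bounds are one-line Gaussian computations, whereas here $\tilde\theta_t$ is implicit and the Hessian of \cref{lem:workhorse} is evaluated at a point that itself depends on $\tilde\theta_t$. The single-nonzero-entry assumption neutralizes this by turning $x_i\T(\tilde\theta_t - \bar\theta_t)$ into a ratio whose numerator has a fixed conditional Gaussian law, so $H$'s dependence on the noise survives only through $\dot{\mu}_{\min} G_t \preceq H \preceq \dot{\mu}_{\max} G_t$ — the upper relation unconditional, the lower one valid on $E \cap E'$. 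What remains is bookkeeping: that $t > \tau$ makes $G_t$ and $H$ invertible, that the assumed $\probt{\bar E'} \le 1/n$ is the bound \glmfpl's choice of $\tau$ delivers (\cref{lem:glm-fpl unit ball}), and that the stated $a$ and $c_2$ make the two tails land at $1/(Kn)$ and at $\prob{\mathcal{N}(0,1) > 1}$.
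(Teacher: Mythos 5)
Your proposal is correct and follows essentially the same route as the paper's proof: linearize via \cref{lem:workhorse}, use the single-non-zero-entry assumption to make $G_t^{-1}\tilde{H}_t$ diagonal (so the sign/scale of $x\T(\tilde{\theta}_t-\bar{\theta}_t)$ can be read off against a fixed conditional Gaussian), then apply the unconditional upper sandwich $\tilde{H}_t\preceq\dot{\mu}_{\max}G_t$ for the anti-concentration bound on $E_{3,t}$ and the lower sandwich (valid on $E\cap E'$, since $\theta'_t$ lies in the unit ball) together with the split over $\bar{E}'$ and a union bound for $\probt{\bar{E}_{2,t}}\leq 2/n$. The only difference is cosmetic: you work coordinate-wise with $V_j$, whereas the paper phrases the same argument through $U=x\T G_t^{-1}\sum_\ell X_\ell Z_\ell$ and sign preservation of $G_t^{-1}\tilde{H}_t$.
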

\begin{proof}
Fix any history $\cF_{t - 1}$. By \cref{lem:workhorse}, where $\cD_1 = \set{(X_\ell, Y_\ell)}_{\ell = 1}^{t - 1}$ and $\cD_2 = \set{(X_\ell, Y_\ell + Z_\ell)}_{\ell = 1}^{t - 1}$, we get
\begin{align*}
  \sum_{\ell = 1}^{t - 1} Z_\ell X_\ell
  = \tilde{H}_t (\tilde{\theta}_t - \bar{\theta}_t)\,,
\end{align*}
where $Z_\ell \in \mathcal{N}(0, a^2)$ are i.i.d.\ normal random variables,
\begin{align*}
  \tilde{H}_t
  = \sum_{\ell = 1}^{t - 1} \dot{\mu}(X_\ell\T \theta'_t) X_\ell X_\ell\T\,,
\end{align*}
and $\theta'_t = \alpha \bar{\theta}_t + (1 - \alpha) \tilde{\theta}_t$ for some $\alpha \in [0, 1]$. Fix any $x \in \realset^d$ and let $U = x\T G_t^{-1} \sum_{\ell = 1}^{t - 1} Z_\ell X_\ell$. Then
\begin{align*}
  x\T G_t^{-1} \tilde{H}_t (\tilde{\theta}_t - \bar{\theta}_t)
  = U
  \sim \mathcal{N}(0, a^2 \normGt{x}^2)\,.
\end{align*}
Since $U$ is a normal random variable, we have that
\begin{align*}
  \probt{U \geq a \normGt{x}}
  & \geq 0.15\,, \\
  \probt{U \geq c \normGt{x}}
  & \leq \exp\left[- \frac{c^2}{2 a^2}\right]\,,
\end{align*}
for any $c > 0$.

Since all feature vectors have at most one non-zero entry, $G_t^{-1}$ and $\tilde{H}_t$ are diagonal, as is $G_t^{-1} \tilde{H}_t$. By the definitions of $G_t$ and $\tilde{H}_t$, diagonal entries of $G_t^{-1} \tilde{H}_t$ are non-negative and at most $\dot{\mu}_{\max}$. Let $x$ have at most one non-zero entry. Then $x\T (\tilde{\theta}_t - \bar{\theta}_t)$ and $x\T G_t^{-1} \tilde{H}_t (\tilde{\theta}_t - \bar{\theta}_t)$ have the same sign, which we use to derive
\begin{align*}
  0.15
  & \leq \probt{U \geq a \normGt{x}} \\
  & \leq \probt{\dot{\mu}_{\max} \, x\T (\tilde{\theta}_t - \bar{\theta}_t) \geq
  a \normw{x}{G_t^{-1}}} \\
  & = \probt{x\T (\tilde{\theta}_t - \bar{\theta}_t) \geq
  a \dot{\mu}_{\max}^{-1} \normGt{x}}\,.
\end{align*}
For $a = c_1 \dot{\mu}_{\max}$ and $x = x_1$, we get that event $E_{3, t}$ in \eqref{eq:theta tilde is optimistic} occurs with probability at least $0.15$.

The diagonal entries of $G_t^{-1} \tilde{H}_t$ are non-negative, and also at least $\dot{\mu}_{\min}$ on events $E$ and $E'$. So, on event $E$,
\begin{align*}
  \exp\left[- \frac{c^2}{2 a^2}\right]
  & \geq \probt{U \geq c \normGt{x}} \\
  & \geq \probt{U \geq c \normGt{x}, \, E' \text{ occurs}} \\
  & \geq \probt{\dot{\mu}_{\min} \, x\T (\tilde{\theta}_t - \bar{\theta}_t) \geq
  c \normGt{x}} - \frac{1}{n} \\
  & = \probt{x\T (\tilde{\theta}_t - \bar{\theta}_t) \geq
  c \dot{\mu}_{\min}^{-1} \normGt{x}} - \frac{1}{n}\,.
\end{align*}
For $c = a \sqrt{2 \log(K n)}$, $x = x_i$, and by the union bound over all $K$ arms, we get that event $\bar{E}_{2, t}$ in \eqref{eq:theta tilde is close} occurs with probability at most $2 / n$.
\end{proof}

\subsection{Discussion}
\label{sec:discussion}

The regret of \glmtsl is $\tilde{O}(d \sqrt{n \log K})$ (\cref{thm:glm-tsl upper bound}). Up to the factor of $\sqrt{\log K}$, this matches the gap-free bounds of \glmucb \citep{filippi10parametric} and \ucbglm \citep{li17provably}. As in \citet{agrawal13thompson}, the key idea in our analysis is to achieve optimism by inflating the covariance matrix in \glmtsl by $a = O(\sqrt{d \log n})$. This setting is too conservative in practice. Thus, in \cref{sec:experiments}, we also experiment with $a = O(1)$, which is known to work well in practice \citep{chapelle11empirical,russo18tutorial}.

The regret of \glmfpl is $\tilde{O}(d \sqrt{n \log K})$ (\cref{thm:glm-fpl upper bound}). Although the bound scales with $K$, $d$, and $n$ similarly to that in \cref{thm:glm-tsl upper bound}, it is worse in constant factors. For instance, $c_2$ is additionally multiplied by $\sqrt{\dot{\mu}_{\min}^{-1} \, \dot{\mu}_{\max}}$. The number of initial exploration rounds is also higher, since we need to guarantee that $\tilde{\theta}_t$ and $\theta_\ast$ are close with a high probability given any $\cF_{t - 1}$. As in \glmtsl, the suggested value of $a = O(\sqrt{d \log n})$ is too conservative in practice. Thus, we also experiment with $a = O(1)$ in \cref{sec:experiments}.

The regret bound of \glmfpl is proved under the assumption that feature vectors have at most one non-zero entry. We need this assumption for the following reason. We establish in \cref{lem:glm-fpl} that
\begin{align*}
  U
  = x\T G_t^{-1} \tilde{H}_t (\tilde{\theta}_t - \bar{\theta}_t)
  \sim \mathcal{N}(0, a^2 \normGt{x}^2)\,.
\end{align*}
Since $a \normGt{x}$ is one standard deviation of $U$, event $U > a \normGt{x}$ is likely. But we need event $U' = x\T (\tilde{\theta}_t - \bar{\theta}_t) > a \normGt{x}$ to be likely. If $G_t^{-1}$ and $\tilde{H}_t$ have different eigenvectors, $U$ and $U'$ can have different signs, and it is hard to relate them due to potential rotations by $G_t^{-1} \tilde{H}_t$. Our assumption guarantees that the eigenvectors of $G_t^{-1}$ and $\tilde{H}_t$ are identical. We leave the elimination of this assumption for future work.

The initial exploration in \glmtsl and \glmfpl can be implemented as follows. Let $\set{v_i}_{i = 1}^d \subseteq \set{x_i}_{i = 1}^K$ be any basis in $\realset^d$ and $M = \sum_{i = 1}^d v_i\T v_i$. Then, to satisfy assumptions $\lambda_{\min}(G_\tau) \geq C$ in \cref{thm:glm-tsl upper bound,thm:glm-fpl upper bound}, each arm in the basis is pulled $C \lambda_{\min}^{-1}(M)$ times.


\section{Experiments}
\label{sec:experiments}

We conduct two sets of experiments. In \cref{sec:logistic bandit experiments}, we assess the empirical regret of \glmtsl and \glmfpl in logistic bandits. Because of its simplicity and generality, the perturbation mechanism in \glmfpl can be easily applied to more complex models. We assess it on contextual bandit problems with neural networks in \cref{sec:deep bandit experiments}.

\subsection{Logistic Bandit}
\label{sec:logistic bandit experiments}

The goal of this experiment is to show that our proposed algorithms perform well. We experiment with a \emph{logistic bandit}, a GLM bandit where $\mu(v) = 1 / (1 + \exp[- v])$ and $Y_{i, t} \sim \mathrm{Ber}(\mu(x_i\T \theta_\ast))$. The number of arms is $K = 100$. To avoid bias in choosing problem instances, we generate them randomly: the feature vector of arm $i$ is drawn uniformly at random from $[-1, 1]^d$ and the parameter vector is $\theta_\ast \sim \mathcal{N}(\mathbf{0}, 3 d^{-2} I_d)$, where $I_d$ is a $d \times d$ identity matrix. By design, $\var{x_i\T \theta_\ast} = 1$, and so $x_i\T \theta_\ast \in [-4, 4]$ with a high probability. We vary the number of features $d$ from $5$ to $20$. The horizon is $n = 50\,000$ rounds and our results are averaged over $100$ problem instances.

\begin{figure*}[t]
  \centering
  \includegraphics[width=2.22in]{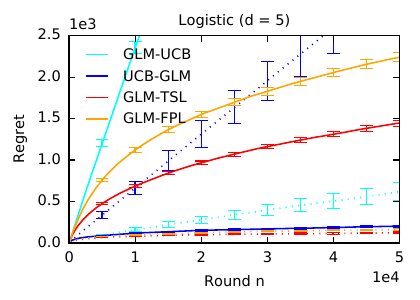}
  \includegraphics[width=2.22in]{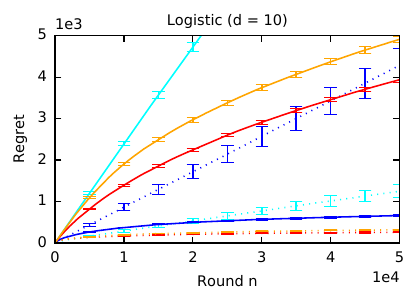}
  \includegraphics[width=2.22in]{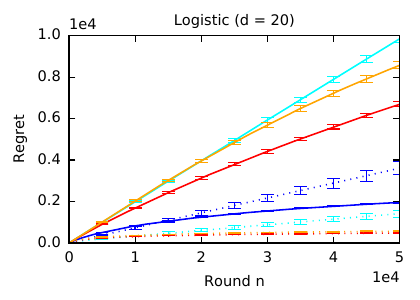}
  \vspace{-0.25in}
  \caption{Evaluation of \glmtsl and \glmfpl in logistic bandits. The $n$-round regret is shown as a function of $n$. The solid and dotted lines represented theory-suggested and informal designs, respectively.}
  \label{fig:logistic bandit}
\end{figure*}

Our baselines are two UCB algorithms, \glmucb \citep{filippi10parametric} and \ucbglm \citep{li17provably}. We experiment with two designs for each evaluated algorithm, \emph{theory} (as analyzed) and \emph{informal} (practical). For \glmtsl, we use $a$ from \cref{thm:glm-tsl upper bound} and $a = 1$, for which \eqref{eq:glm-tsl} reduces to sampling from the Laplace approximation. For \glmfpl, we use $a$ from \cref{thm:glm-fpl upper bound} and $a = 0.5$. We choose the latter since $a$ in \cref{thm:glm-fpl upper bound} is half that in \cref{thm:glm-tsl upper bound} in logistic models, since $\dot{\mu}_{\max} = 0.25$. We also implement $\glmucb$ and $\ucbglm$ with tighter confidence intervals, $0.5 \normw{x}{G^{-1}}$, where $x$ is the feature vector of the arm, $G$ is the sample covariance matrix, and $0.5$ is the maximum standard deviation of rewards in logistic models. All algorithms pull $d$ linearly independent arms initially and $\dot{\mu}_{\min}$ is set to the most optimistic value of $0.25$.

Our results are shown in \cref{fig:logistic bandit}. We observe that theory \glmtsl and \glmfpl outperform theory \glmucb, but not theory \ucbglm. The latter is known from prior algorithm designs. In particular, when \lints \citep{agrawal13thompson} is implemented as analyzed, it fails to outperform \linucb \citep{abbasi-yadkori11improved}; but it does outperform it when the theory-suggested posterior scaling is relaxed. This is indeed how \lints is usually implemented. Informal \glmucb and \ucbglm fail, and have linear regret in $n$. On the other hand, informal \glmtsl and \glmfpl have low regret, sublinear in $n$. We conclude that \glmtsl and \glmfpl have state-of-the-art performance in logistic bandits.

\subsection{Deep Bandit}
\label{sec:deep bandit experiments}

The second experiment is on contextual bandit problems, which are generated as follows. We fix a supervised learning dataset $\mathcal{D}$ and a target label $c$. The examples with label $c$ have random rewards $\mathrm{Ber}(0.75)$ while the other examples have random rewards $\mathrm{Ber}(0.25)$. In round $t$, the agent is presented $K = 10$ random examples $x_{i, t}$ from $\mathcal{D}$, which are arms. The agent learns a single generalization model that maps feature vector $x_{i, t}$ to its expected reward. The goal of the agent is to learn a good mapping quickly. Since our generalization models are imperfect, our evaluation metric is the \emph{average per-round reward} in $n$ rounds, which we define as $\sum_{t = 1}^n Y_t / n$.

\begin{figure*}[t]
  \centering
  \includegraphics[width=2.2in]{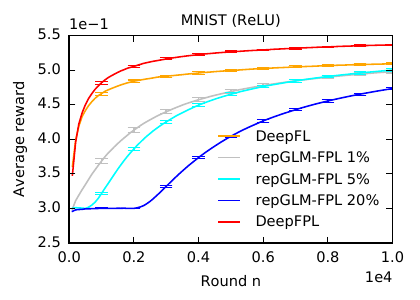}
  \includegraphics[width=2.2in]{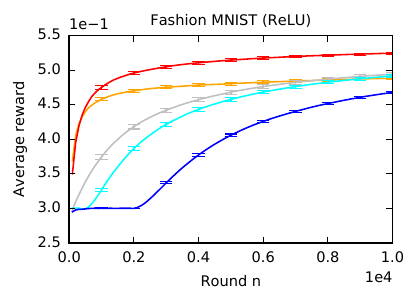}
  \includegraphics[width=2.2in]{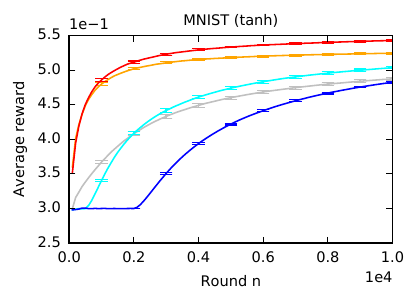}
  \vspace{-0.1in}
  \caption{Evaluation of \deepfpl on contextual bandit problems in \cref{sec:deep bandit experiments}.}
  \label{fig:deep bandit}
\end{figure*}

We experiment with two large-scale datasets: MNIST and Fashion MNIST. \emph{MNIST} \citep{lecun98gradientbased} is a dataset of $60$ thousand $28 \times 28$ gray-scale images of handwritten digits, from $0$ to $9$. \emph{Fashion MNIST} \citep{xiao17fashionmnist} is a dataset of $60$ thousand $28 \times 28$ gray-scale images in $10$ fashion categories. We generate $500$ bandit instances for each dataset, $50$ for each class in that dataset. The horizon is $n = 10\,000$ rounds and we report the average reward over all instances in each dataset.

We implement \glmfpl with the neural network generalization in Keras \citep{chollet15keras}. The neural network has a single fully-connected hidden layer with $50$ units. We experiment with both ReLU and tanh activation functions in the hidden layer. The output layer is a sigmoid. In each round, the model is updated using the adaptive optimizer Adam \citep{kingma15adam}, where the learning rate is $0.001$ and the mini-batch contains $32$ most recent examples. These are default settings in Keras. Yogi \citep{zaheer18adaptive} could be used instead of Adam. The rewards of the training examples are perturbed with i.i.d.\ $\mathcal{N}(0, a^2)$ noise where $a = 1$. We call this algorithm \deepfpl.

We consider two baselines. The first is a follow-the-leader variant of \deepfpl where $a = 0$. We call it \deepfl. The second is a variant of \emph{Neural Linear}, the best method in a recent large empirical study \citep{riquelme18deep}. This approach learns a representation separately of the bandit problem and applies an existing bandit algorithm to it. We learn the representation in $m$ percent of initial rounds by exploring randomly. The representation is the same neural network as in \deepfpl. After learning, we chop its head off and use the rest to embed feature vectors. The bandit algorithm is \glmfpl and we call this combined approach \repglmfpl. We experiment with $m$ from $1\%$ to $20\%$.

Our results are reported in \cref{fig:deep bandit}. We observe three major trends. First, \deepfpl achieves high average rewards of at least $0.5$, which is close to the theoretical optimum $0.25 \, (1 / K)^K + 0.75 \, (1 - (1 / K)^K) \approx 0.576$ in both our problems. Second, \deepfpl outperforms \deepfl. This shows that exploration is beneficial, since the only difference between \deepfpl and \deepfl is that \deepfpl perturbs rewards to explore. Third, \deepfpl outperforms all variants of \repglmfpl. This shows that interleaving of representation learning and exploration is beneficial. Also note that the best setting of $m$ in \repglmfpl depends on the problem. For instance, at $n = 10\,000$ rounds, $1\%$ and $5\%$ exploration is comparable in the first two plots, while $5\%$ exploration is superior in the last plot. \deepfpl does not need any such tunable parameter.

This experiment shows that \glmfpl generalizes easily to complex models and works well.  While it does not have regret guarantees in these models, it should be of interest to practitioners.


\section{Related Work}
\label{sec:related work}

In the infinite arm setting, \citet{abeille17linear} proved that the regret of \glmtsl is $\tilde{O}(d^\frac{3}{2} \sqrt{n})$. We prove that it is $\tilde{O}(d \sqrt{n \log K})$ when the number of arms is $K$. This is an improvement of $\sqrt{d / \log K}$ in our setting. We also match the result of \citet{abeille17linear} in the infinite arm setting. Specifically, if the space of arms was discretized on an $\varepsilon$-grid, and this discretization would not change the order of the regret, the number of arms would be $K = \varepsilon^{- d}$ and $\sqrt{\log K} = \sqrt{d \log(1 / \varepsilon)}$. Our analysis is different from \citet{abeille17linear} and is more like that of \citet{agrawal13thompson}. We also match, up to the factor of $\sqrt{\log K}$, the bounds of most non-randomized GLM bandit algorithms \citep{filippi10parametric,zhang16online,li17provably,jun17scalable}, which are $\tilde{O}(d \sqrt{n})$.

\citet{dong19performance} proved that the $n$-round Bayes regret of \glmtsl is $\tilde{O}(d \sqrt{n})$. This bound is for a weaker performance metric than in this work, the Bayes regret; applies only to logistic bandits; and makes strong assumptions on the features of arms and $\theta_\ast$. However, it does not depend on $\dot{\mu}_{\min}$, which is a significant advance.

Similarly to \glmtsl, we prove that the regret of \glmfpl is $\tilde{O}(d \sqrt{n \log K})$. This regret bound is under the assumption that feature vectors have at most one non-zero entry. Although limited, this result is non-trivial since the number of potentially optimal arms is $2 d$, two per dimension. This is the first frequentist regret bound for exploration by Gaussian noise perturbations in a non-linear model. The good empirical performance of \glmfpl (\cref{sec:experiments}) suggests that the regret bound should hold in general, and we leave the more general analysis as future work.

\glmtsl is a variant of Thompson sampling. Thompson sampling \citep{thompson33likelihood,agrawal13further,russo18tutorial} is relatively well understood in linear bandits \citep{agrawal13thompson,valko14spectral}. However, it is difficult to extend it to non-linear problems because their posterior distributions are complex and have to be approximated. In general, posterior approximations in bandits are computationally costly and lack regret guarantees \citep{gopalan14thompson,kawale15efficient,lu17ensemble,riquelme18deep,lipton18bbq,liu18customized}. We provide guarantees in this work.

\glmfpl is a follow-the-perturbed-leader algorithm \citep{hannan57approximation,kalai05efficient}. We can also view it as randomized least-squares value iteration \citep{osband16generalization} applied to bandits. Our instance, additive Gaussian noise in a GLM, is novel. \glmfpl is also closely related to perturbed-history exploration \citep{kveton19garbage,kveton19perturbed,kveton19perturbed2}. \citet{kveton19perturbed2} proposed a logistic bandit algorithm that explores by perturbing its history with Bernoulli noise. This algorithm was not analyzed and is less general than \glmfpl, as it is only for logistic bandits.


\section{Conclusions}
\label{sec:conclusions}

We study two randomized algorithms for GLM bandits, \glmtsl and \glmfpl. The key idea in both algorithms is to explore by perturbing the maximum likelihood estimate in round $t$. We analyze \glmtsl and \glmfpl, and prove that their $n$-round regret is $\tilde{O}(d \sqrt{n \log K})$. Both \glmtsl and \glmfpl perform well empirically in logistic bandits. \glmfpl can be easily generalized to more complex problems. Our experiments with neural networks are very encouraging, and indicate that \glmfpl can be analyzed beyond GLM bandits. We plan to conduct such analyses in future work.

Our analysis is under the assumption that the feature vectors of arms are fixed and do not change over time. This assumption can be lifted. The only part of the proof that changes is that the number of initial exploration rounds $\tau$ after which $\lambda_{\min}(G_\tau)$ (\cref{thm:glm-tsl upper bound,thm:glm-fpl upper bound}) is large enough becomes a random variable. \citet{li17provably} analyzed this random variable and we can directly reuse their result.

\clearpage

\bibliographystyle{plainnat}
\bibliography{References}

\clearpage
\onecolumn
\appendix


\section{Regret Bounds}
\label{sec:regret bounds}

The following lemma bounds the expected per-round regret of any randomized algorithm that chooses the perturbed solution in round $t$, $\tilde{\theta}_t$, as a function of the history.

\perroundregret*
\begin{proof}
Let $\tilde{\Delta}_i = x_1\T \theta_\ast - x_i\T \theta_\ast$ and $c = c_1 + c_2$. Let
\begin{align*}
  \bar{S}_t
  = \set{i \in [K]: c \normGt{x_i} \geq \tilde{\Delta}_i}
\end{align*}
be the set of \emph{undersampled arms} in round $t$. Note that $1 \in \bar{S}_t$ by definition. We define the set of \emph{sufficiently sampled arms} as $S_t = [K] \setminus \bar{S}_t$. Let $J_t = \argmin_{i \in \bar{S}_t} \normGt{x_i}$ be the \emph{least uncertain undersampled arm} in round $t$.

In all steps below, we assume that event $E_{1, t}$ occurs. In round $t$ on event $E_{2, t}$,
\begin{align*}
  \Delta_{I_t}
  & \leq \dot{\mu}_{\max} \, \tilde{\Delta}_{I_t}
  = \dot{\mu}_{\max} \left(\tilde{\Delta}_{J_t} +
  x_{J_t}\T \theta_\ast - x_{I_t}\T \theta_\ast\right)
  \leq \dot{\mu}_{\max} \left(\tilde{\Delta}_{J_t} +
  x_{J_t}\T \tilde{\theta}_t - x_{I_t}\T \tilde{\theta}_t +
  c \, (\normGt{x_{I_t}} + \normGt{x_{J_t}})\right) \\
  & \leq \dot{\mu}_{\max} \, c \left(\normGt{x_{I_t}} + 2 \normGt{x_{J_t}}\right)\,,
\end{align*}
where the first inequality holds because $\dot{\mu}_{\max}$ is the maximum derivative of $\mu$, the second is by the definitions of events $E_{1, t}$ and $E_{2, t}$, and the last follows from the definitions of $I_t$ and $J_t$. Now we take the expectation of both sides and get
\begin{align*}
  \Et{\Delta_{I_t}}
  = \Et{\Delta_{I_t} \I{E_{2, t}}} + \Et{\Delta_{I_t} \I{\bar{E}_{2, t}}}
  \leq \dot{\mu}_{\max} \, c \, \Et{\normGt{x_{I_t}} + 2 \normGt{x_{J_t}}} +
  \Delta_{\max} \, \probt{\bar{E}_{2, t}}\,.
\end{align*}
The last step is to replace $\Et{\normGt{x_{J_t}}}$ with $\Et{\normGt{x_{I_t}}}$. To do so, observe that
\begin{align*}
  \Et{\normGt{x_{I_t}}}
  \geq \Et{\normGt{x_{I_t}} \,\middle|\, I_t \in \bar{S}_t}
  \probt{I_t \in \bar{S}_t}
  \geq \normGt{x_{J_t}} \, \probt{I_t \in \bar{S}_t}\,,
\end{align*}
where the last inequality follows from the definition of $J_t$ and that $\bar{S}_t$ is $\cF_{t - 1}$-measurable. We rearrange the inequality as $\normGt{x_{J_t}} \leq \Et{\normGt{x_{I_t}}} / \ \probt{I_t \in \bar{S}_t}$ and bound $\probt{I_t \in \bar{S}_t}$ from below next.

In particular, on event $E_{1, t}$,
\begin{align*}
  \probt{I_t \in \bar{S}_t}
  & \geq \probt{\exists i \in \bar{S}_t: x_i\T \tilde{\theta}_t >
  \max_{j \in S_t} x_j\T \tilde{\theta}_t}
  \geq \probt{x_1\T \tilde{\theta}_t >
  \max_{j \in S_t} x_j\T \tilde{\theta}_t} \\
  & \geq \probt{x_1\T \tilde{\theta}_t >
  \max_{j \in S_t} x_j\T \tilde{\theta}_t, \, E_{2, t} \text{ occurs}}
  \geq \probt{x_1\T \tilde{\theta}_t >
  x_1\T \theta_\ast, \, E_{2, t} \text{ occurs}} \\
  & \geq \probt{x_1\T \tilde{\theta}_t > x_1\T \theta_\ast} -
  \probt{\bar{E}_{2, t}}
  \geq \probt{x_1\T \tilde{\theta}_t - x_1\T \bar{\theta}_t > c_1 \normGt{x_1}} -
  \probt{\bar{E}_{2, t}}\,.
\end{align*}
Note that we require a sharp inequality because $I_t \in \bar{S}_t$ is not guaranteed on event $\set{\exists i \in \bar{S}_t: x_i\T \tilde{\theta}_t \geq \max_{j \in S_t} x_j\T \tilde{\theta}_t}$. The fourth inequality holds because on event $E_{1, t} \cap E_{2, t}$,
\begin{align*}
  x_j\T \tilde{\theta}_t
  \leq x_j\T \theta_\ast + c \normGt{x_j}
  < x_j\T \theta_\ast + \tilde{\Delta}_j
  = x_1\T \theta_\ast
\end{align*}
holds for any $j \in S_t$. The last inequality holds because $x_1\T \theta_\ast \leq x_1\T \bar{\theta}_t + c_1 \normGt{x_1}$ holds on event $E_{1, t}$. Finally, we use the definitions of $p_2$ and $p_3$ to complete the proof.
\end{proof}

The regret bound of \glmtsl is proved below.

\glmtslregretbound*
\begin{proof}
Fix $\tau \in [n]$. Let
\begin{align*}
  E_{4, t}
  = \set{\normw{\bar{\theta}_t - \theta_\ast}{2} \leq 1}
\end{align*}
and $p_4 \geq \prob{\bar{E}_{4, t}}$ for $t \geq \tau$. Let $p_1 \geq \prob{\bar{E}_{1, t}, E_{4, t}}$, $p_2 \geq \probt{\bar{E}_{2, t}}$ on event $E_{4, t}$, and $p_3 \leq \probt{E_{3, t}}$. By elementary algebra, we get
\begin{align*}
  R(n)
  & \leq \sum_{t = \tau}^n \E{\Delta_{I_t}} + \tau \Delta_{\max} \\
  & \leq \sum_{t = \tau}^n \E{\Delta_{I_t} \I{E_{4, t}}} +
  (\tau + p_4 n) \Delta_{\max} \\
  & \leq \sum_{t = \tau}^n \E{\Delta_{I_t} \I{E_{1, t}, E_{4, t}}} +
  (\tau + (p_1 + p_4) n) \Delta_{\max} \\
  & = \sum_{t = \tau}^n \E{\Et{\Delta_{I_t}} \I{E_{1, t}, E_{4, t}}} +
  (\tau + (p_1 + p_4) n) \Delta_{\max}\,.
\end{align*}
To get $p_1 \leq 1 / n$, we set $c_1$ as in \cref{lem:theta bar concentration}. Now we apply \cref{lem:per-round regret} to $\Et{\Delta_{I_t}} \I{E_{1, t}, E_{4, t}}$ and get
\begin{align*}
  R(n)
  & \leq \dot{\mu}_{\max} (c_1 + c_2) \left(1 + \frac{2}{p_3 - p_2}\right)
  \E{\sum_{t = \tau}^n \normw{x_{I_t}}{G_t^{-1}}} +
  (\tau + (p_1 + p_2 + p_4) n) \Delta_{\max}\,,
\end{align*}
where $a$ and $c_2$ are set as in \cref{lem:glm-tsl}. For these settings, $p_2 \leq 1 / n$ and $p_3 \geq 0.15$. To bound $\sum_{t = \tau}^n \normw{x_{I_t}}{G_t^{-1}}$, we use Lemma 2 in \citet{li17provably}. Finally, to get $p_4 \leq 1 / n$, we choose $\tau$ as in \cref{lem:glm-tsl unit ball}.
\end{proof}

The regret bound of \glmfpl is proved below.

\glmfplregretbound*
\begin{proof}
The proof is almost identical to that of \cref{thm:glm-tsl upper bound}. There are two main differences. First, $a$ and $c_2$ are set as in \cref{lem:glm-fpl}. For these settings, $p_2 \leq 2 / n$ and $p_3 \geq 0.15$. Second, $\tau$ is set as in \cref{lem:glm-fpl unit ball}.
\end{proof}


\section{Technical Lemmas}
\label{sec:technical lemmas}

We need an extension of Theorem 1 in \citet{abbasi-yadkori11improved}, which is concerned with concentration of a certain vector-valued martingale. The setup of the claim is as follows. Let $(\cF_t)_{t \geq 0}$ be a filtration, $(\eta_t)_{t \geq 1}$ be a stochastic process such that $\eta_t$ is real-valued and $\cF_t$-measurable, and $(X_t)_{t \geq 1}$ be another stochastic process such that $X_t$ is $\realset^d$-valued and $\cF_{t - 1}$-measurable. We also assume that $(\eta_t)_t$ is conditionally $R^2$-sub-Gaussian, that is
\begin{align}
  \forall \lambda \in \realset: \quad
  \condE{\exp[\lambda \eta_t]}{\cF_{t - 1}}
  \leq \exp\left[\frac{\lambda^2 R^2}{2}\right]\,.
\end{align}
We call the triplet $((X_t)_t, (\eta_t)_t, \mathbb{F})$ \say{nice} when these conditions hold. The modified claim is stated and proved below.

\begin{lemma}
\label{lem:self-normalized bound} Let $((X_t)_t, (\eta_t)_t, \mathbb{F})$ be a \say{nice} triplet, $S_t = \sum_{s = 1}^t \eta_s X_s$, $V_t = \sum_{s = 1}^t X_s X_s\T$; and for $V \succ 0$, let $\tau_0 = \min \set{t \geq 1: V_t \succeq V}$. Then for any $\delta \in (0, 1)$ and $\mathbb{F}$-stopping time $\tau \geq 1$ such that $\tau \geq \tau_0$ holds almost surely, with probability at least $1 - \delta$,
\begin{align*}
  \normw{S_\tau}{V_\tau^{-1}}^2
  \leq 2 R^2 \log\left(\frac{\det(V_\tau)^\frac{1}{2}
  \det(V_{\tau_0})^{- \frac{1}{2}}}{\delta}\right)\,.
\end{align*}
\end{lemma}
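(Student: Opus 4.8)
The plan is to use the method of mixtures (pseudo-maximization) of \citet{abbasi-yadkori11improved}; the only genuinely new ingredient is that the ``regularizer'' we want in the final bound is the \emph{random} matrix $V_{\tau_0}$ rather than a deterministic $V$, which forces the mixture to be started at the stopping time $\tau_0$, against a prior that is $\cF_{\tau_0}$-measurable (equivalently, to condition on $\cF_{\tau_0}$ and then apply the classical deterministic-regularizer bound to the post-$\tau_0$ process).

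First I would record the basic exponential supermartingale: for fixed $\lambda \in \realset^d$,
\[
  M_t^\lambda = \exp\!\left(\inner{\lambda}{S_t} - \tfrac12 R^2 \normw{\lambda}{V_t}^2\right),
\]
whose one-step conditional increment is $\exp(\eta_t\inner{\lambda}{X_t} - \tfrac12 R^2 \inner{\lambda}{X_t}^2)$; since $X_t$ is $\cF_{t-1}$-measurable and $\eta_t$ is conditionally $R^2$-sub-Gaussian, $\condE{M_t^\lambda/M_{t-1}^\lambda}{\cF_{t-1}} \le 1$, so $(M_t^\lambda)_{t\ge0}$ is a nonnegative supermartingale with $M_0^\lambda = 1$. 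Because $V_{\tau_0}$ is only $\cF_{\tau_0}$-measurable, I cannot mix the $M_t^\lambda$ directly; instead, for $t \ge \tau_0$ I pass to $N_t^\lambda = M_t^\lambda / M_{\tau_0}^\lambda$, which with respect to the shifted filtration $(\cF_{\tau_0+k})_{k\ge0}$ is again a nonnegative supermartingale, now starting from $1$.

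Next I would run the mixture. Since $V_{\tau_0} \succeq V \succ 0$, the Gaussian $\pi_{\tau_0} = \mathcal{N}(R^{-2}V_{\tau_0}^{-1}S_{\tau_0},\, R^{-2}V_{\tau_0}^{-1})$ is well-defined and $\cF_{\tau_0}$-measurable. Setting $\tilde N_t = \int_{\realset^d} N_t^\lambda\, d\pi_{\tau_0}(\lambda)$, a mixture over a probability measure (measurable with respect to the ``initial'' $\sigma$-algebra $\cF_{\tau_0}$) of nonnegative supermartingales is a nonnegative supermartingale by Tonelli, so $(\tilde N_{\tau_0+k})_{k\ge0}$ is a nonnegative supermartingale with $\tilde N_{\tau_0}=1$. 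Carrying out the Gaussian integral — complete the square in $\lambda$; the prior precision $R^2 V_{\tau_0}$ added to $R^2(V_t - V_{\tau_0})$ produces exactly $R^2 V_t$, and the prior mean is chosen so the linear term reassembles $S_t$ — gives
\[
  \tilde N_t = \left(\frac{\det(V_{\tau_0})}{\det(V_t)}\right)^{\!1/2}\exp\!\left(\frac{1}{2R^2}\normw{S_t}{V_t^{-1}}^2 - \frac{1}{2R^2}\normw{S_{\tau_0}}{V_{\tau_0}^{-1}}^2\right).
\]
Then $\tau-\tau_0$ is a stopping time for $(\cF_{\tau_0+k})_k$, and optional stopping for nonnegative supermartingales (localize at $(\tau-\tau_0)\wedge k$, use a.s.\ convergence and Fatou) gives $\E{\tilde N_\tau}\le\tilde N_{\tau_0}=1$, so $\prob{\tilde N_\tau \ge 1/\delta}\le\delta$ by Markov's inequality. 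On the complementary event, taking logarithms and rearranging yields $\normw{S_\tau}{V_\tau^{-1}}^2 \le \normw{S_{\tau_0}}{V_{\tau_0}^{-1}}^2 + 2R^2\log(\det(V_\tau)^{1/2}\det(V_{\tau_0})^{-1/2}/\delta)$; the stated bound follows after absorbing the residual $\normw{S_{\tau_0}}{V_{\tau_0}^{-1}}^2$ (itself $O(R^2\log(\cdot))$ by the ordinary self-normalized bound applied at $\tau_0$, since $V+V_{\tau_0}\preceq 2V_{\tau_0}$) into the constant, or by a sharper choice of prior.

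\textbf{Main obstacle.} The Gaussian-integral bookkeeping is routine; the delicate part is the legitimacy of starting the mixture at the random time $\tau_0$ — one must verify that $\tilde N$ is genuinely a supermartingale when the mixing measure $\pi_{\tau_0}$ is $\cF_{\tau_0}$-measurable, which is exactly why one divides by $M_{\tau_0}^\lambda$ and treats $\cF_{\tau_0}$ as ``time zero'' — together with invoking the maximal/optional-stopping inequality at the possibly unbounded stopping time $\tau$. A secondary bookkeeping point is matching the clean form of the bound, i.e.\ handling the $\normw{S_{\tau_0}}{V_{\tau_0}^{-1}}^2$ term produced by the shift of the prior mean.
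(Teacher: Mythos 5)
Your supermartingale construction, the restart of the mixture at $\tau_0$ against an $\cF_{\tau_0}$-measurable Gaussian prior, and the completed-square evaluation of $\tilde N_t$ are all correct, and this is essentially the route the paper intends: its proof likewise \say{adjusts} the method of mixtures of \citet{abbasi-yadkori11improved} by allowing an $\cF_0$-measurable regularizer and then time-shifting the triplet to $X_t' = X_{\tau_0 + t}$, $\eta_t' = \eta_{\tau_0 + t}$, $\cF_t' = \cF_{\tau_0 + t}$. The genuine gap is your final step. The residual $\normw{S_{\tau_0}}{V_{\tau_0}^{-1}}^2$ cannot be \say{absorbed into the constant}, nor removed by a sharper prior: by the ordinary self-normalized bound it is of order $R^2 \log\bigl(\det(V + V_{\tau_0})^{1/2} \det(V)^{-1/2} / \delta\bigr)$, i.e.\ a $d\log(\cdot)$-sized quantity (and costs a second failure probability $\delta$), whereas the right-hand side you must match can be as small as $2 R^2 \log(1/\delta)$, e.g.\ when $\tau = \tau_0$. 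What your argument actually establishes is the inequality with the additive term $\normw{S_{\tau_0}}{V_{\tau_0}^{-1}}^2$ on the right (equivalently, with prior mean $\mathbf{0}$, the clean bound for the increment $S_\tau - S_{\tau_0}$), not the stated inequality for $S_\tau$.

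Moreover, no choice of prior can close this gap, because the statement as written is false: take $X_t$ cycling deterministically through the standard basis of $\realset^d$, $\eta_t$ i.i.d.\ $\mathcal{N}(0, 1)$ (so $R = 1$), $V = \tfrac{1}{2} I_d$, and $\tau = \tau_0 = d$; then $\normw{S_\tau}{V_\tau^{-1}}^2 = \sum_{i = 1}^d \eta_i^2$ is $\chi^2_d$-distributed while the right-hand side equals $2 \log(1/\delta)$, so the claimed $1 - \delta$ coverage fails already for moderate $d$. You should know that the paper's own two-line proof has exactly the lacuna you flagged: applying the conditional bound to the shifted triplet controls $S_\tau - S_{\tau_0}$ and $V_\tau - V_{\tau_0}$, and the pre-$\tau_0$ contribution to $S_\tau$ is silently dropped. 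So your version with the extra $\normw{S_{\tau_0}}{V_{\tau_0}^{-1}}^2$ term is the correct form of the lemma; in the downstream uses (\cref{lem:theta bar concentration,lem:glm-tsl unit ball}) the extra term is of the same $\sigma^2 (d \log(n/d) + \log n)$ order as the main term, so the paper's rates survive with larger constants, but your proposal as written does not prove the lemma as stated, and nothing can.
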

\begin{proof}
The proof in \citet{abbasi-yadkori11improved} can easily adjusted as follows. If $((X_t)_t, (\eta_t)_t, \mathbb{F})$ is a \say{nice} triplet, then for any $\delta \in (0, 1)$, $\cF_0$-measurable matrix $V \succ 0$, and stopping time $\tau \geq 1$,
\begin{align}
  \condprob{\normw{S_\tau}{V_\tau^{-1}}^2
  \leq 2 R^2 \log\left(\frac{\det(V_\tau)^\frac{1}{2}
  \det(V_{\tau_0})^{- \frac{1}{2}}}{\delta}\right)}{\cF_0}
  \geq 1 - \delta\,.
  \label{eq:nice triplet}
\end{align}
Now, for $t \geq 0$, let $X_t' = X_{\tau_0 + t}$, $\eta_t' = \eta_{\tau_0 + t}$, and $\cF_t' = \cF_{\tau_0 + t}$. Then $((X_t')_{t \geq 1}$, $(\eta_t')_{t \geq 1}$, $(\cF_t')_{t \geq 0})$ is a nice triplet and the result follows from \eqref{eq:nice triplet}.
\end{proof}

We use the last lemma to prove the following result.

\begin{lemma}
\label{lem:theta bar concentration} Let $c_1 = \sigma \dot{\mu}_{\min}^{-1} \sqrt{d \log(n / d) + 2 \log n}$ and $\tau$ be any round such that $\lambda_{\min}(G_\tau) \geq 1$. Then for any $t \geq \tau$,
\begin{align*}
  \prob{\bar{E}_{1, t} \emph{ occurs}, \,
  \normw{\bar{\theta}_t - \theta_\ast}{2} \leq 1}
  \leq 1 / n\,.
\end{align*}
\end{lemma}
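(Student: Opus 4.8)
The plan is to reduce the claim to the self-normalized martingale bound of \cref{lem:self-normalized bound}, following the workhorse identity of \cref{lem:workhorse}. First I would set $\cD_1 = \set{(X_\ell, Y_\ell)}_{\ell=1}^{t-1}$, the observed history, and $\cD_2 = \set{(X_\ell, \mu(X_\ell\T\theta_\ast))}_{\ell=1}^{t-1}$, the same features with noiseless rewards. By definition $\bar{\theta}_t$ minimizes $L(\cD_1;\theta)$, and $\theta_\ast$ minimizes $L(\cD_2;\theta)$ because $\nabla L(\cD_2;\theta_\ast) = \sum_\ell (\mu(X_\ell\T\theta_\ast) - \mu(X_\ell\T\theta_\ast)) X_\ell = \mathbf{0}$. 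Applying \cref{lem:workhorse} then gives
\begin{align*}
  \sum_{\ell=1}^{t-1} \eta_\ell X_\ell
  = \nabla^2 L(\cD_1; \theta')(\theta_\ast - \bar{\theta}_t)\,,
\end{align*}
where $\eta_\ell = Y_\ell - \mu(X_\ell\T\theta_\ast)$ is the reward noise, which is $\sigma^2$-sub-Gaussian and $\cF_\ell$-measurable, and $\theta' = \alpha\bar{\theta}_t + (1-\alpha)\theta_\ast$ for some $\alpha\in[0,1]$.

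Next I would restrict to the event $E_{4,t} = \set{\normw{\bar{\theta}_t - \theta_\ast}{2} \leq 1}$, on which $\theta'$ also lies in the unit ball around $\theta_\ast$, so that $\dot{\mu}(X_\ell\T\theta') \geq \dot{\mu}_{\min}$ for every $\ell$ (using $\normw{X_\ell}{2}\leq 1$). This gives $\nabla^2 L(\cD_1;\theta') \succeq \dot{\mu}_{\min} G_t$, hence from the displayed identity, for any arm $i$,
\begin{align*}
  \abs{x_i\T\bar{\theta}_t - x_i\T\theta_\ast}
  = \abs{x_i\T (\nabla^2 L(\cD_1;\theta'))^{-1} S_{t-1}}
  \leq \dot{\mu}_{\min}^{-1} \normGt{x_i}\, \normw{S_{t-1}}{G_t^{-1}}\,,
\end{align*}
by Cauchy--Schwarz in the $G_t^{-1}$ inner product together with $(\nabla^2 L(\cD_1;\theta'))^{-1} \preceq \dot{\mu}_{\min}^{-1} G_t^{-1}$, where $S_{t-1} = \sum_{\ell=1}^{t-1}\eta_\ell X_\ell$. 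So on $E_{4,t}$, the event $\bar{E}_{1,t}$ forces $\normw{S_{t-1}}{G_t^{-1}} > c_1 \dot{\mu}_{\min}$.

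Then I would invoke \cref{lem:self-normalized bound} with $R = \sigma$, $V_t = G_{t+1}$, and $V = I$: since $\lambda_{\min}(G_\tau)\geq 1$ we have $G_\tau \succeq I$, so $\tau_0 \leq \tau \leq t$ and the stopping-time hypothesis is met (taking the deterministic stopping time $t$). This yields, with probability at least $1-\delta$,
\begin{align*}
  \normw{S_{t-1}}{G_t^{-1}}^2
  \leq 2\sigma^2 \log\!\left(\frac{\det(G_t)^{1/2}\det(G_{\tau_0})^{-1/2}}{\delta}\right)
  \leq 2\sigma^2 \log\!\left(\frac{(t/d)^{d/2}}{\delta}\right)
  \leq \sigma^2\!\left(d\log(n/d) + 2\log n\right),
\end{align*}
where the second step uses $\det(G_t) \leq (t/d)^d$ (AM--GM on the eigenvalues, since $\mathrm{tr}(G_t)\leq t-1 < t$) and $\det(G_{\tau_0})\geq 1$, and the last step takes $\delta = 1/n^2$. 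This upper bound is exactly $(c_1\dot{\mu}_{\min})^2$, contradicting $\normw{S_{t-1}}{G_t^{-1}} > c_1\dot{\mu}_{\min}$ on the complement event. Hence $\prob{\bar{E}_{1,t}\text{ occurs}, E_{4,t}} \leq \delta = 1/n^2 \leq 1/n$, which I would refine to the claimed $1/n$ bound (or simply note $1/n^2 \leq 1/n$).

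The main obstacle is getting the stopping-time / filtration bookkeeping exactly right: \cref{lem:self-normalized bound} is designed precisely to handle the fact that the agent acts adaptively after round $\tau$ (this is the ``subtle error'' in prior work flagged in \cref{sec:outline}), so the care is in verifying that $G_\tau\succeq I$ deterministically ensures $V_t \succeq V$ is reached by a valid stopping time before round $t$, and that the bound is applied with the right horizon index $t-1$ versus $t$. The rest is routine determinant estimates and a union-free single-round argument (the union over arms $i\in[K]$ is absorbed since the bound on $\normw{S_{t-1}}{G_t^{-1}}$ is simultaneous over all $i$).
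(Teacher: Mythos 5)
Your argument is essentially the paper's own proof: both routes express $\bar{\theta}_t - \theta_\ast$ through the noise sum $S_{t-1} = \sum_{\ell = 1}^{t-1} \eta_\ell X_\ell$ via \cref{lem:workhorse}, lower-bound the Hessian by $\dot{\mu}_{\min} G_t$ on the event $\normw{\bar{\theta}_t - \theta_\ast}{2} \leq 1$, apply Cauchy--Schwarz, and control $\normw{S_{t-1}}{G_t^{-1}}$ with \cref{lem:self-normalized bound} (the paper does this once, uniformly over $t \geq \tau$, inside \cref{lem:glm-tsl unit ball}, whereas you apply it at each fixed $t$; both are valid here). The one slip is quantitative: with $\delta = 1/n^2$ your last displayed inequality fails, since $2\sigma^2 \log(1/\delta) = 4\sigma^2 \log n$ makes the right-hand side $\sigma^2(d\log(n/d) + 4\log n) > (c_1 \dot{\mu}_{\min})^2$; take $\delta = 1/n$ instead, which gives exactly $\sigma^2(d\log(n/d) + 2\log n)$ and the claimed $1/n$ failure probability. (Also, with your labeling of $\cD_1, \cD_2$ the workhorse identity reads $S_{t-1} = \nabla^2 L(\cD_1;\theta')(\bar{\theta}_t - \theta_\ast)$, i.e.\ your sign is flipped, but this is immaterial since you only use absolute values.)
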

\begin{proof}
Let $S_t = \sum_{\ell = 1}^{t - 1} (Y_\ell - \mu(X_\ell\T \theta_\ast)) X_\ell$. By \cref{lem:workhorse}, where $\cD_1 = \set{(X_\ell, \mu(X_\ell\T \theta_\ast))}_{\ell = 1}^{t - 1}$ and $\cD_2 = \set{(X_\ell, Y_\ell)}_{\ell = 1}^{t - 1}$, we have that
\begin{align*}
  S_t
  = \underbrace{\nabla^2 L(\cD_1; \theta')}_V (\bar{\theta}_t - \theta_\ast)\,,
\end{align*}
where $\theta' = \alpha \theta_\ast + (1 - \alpha) \bar{\theta}_t$ for $\alpha \in [0, 1]$. We rearrange the equality as $V^{-1} S_t = \bar{\theta}_t - \theta_\ast$ and note that $\dot{\mu}_{\min} G_t \preceq V$ on $\normw{\bar{\theta}_t - \theta_\ast}{2} \leq 1$. Now fix arm $i$. By the Cauchy-Schwarz inequality and from the above discussion,
\begin{align*}
  \abs{x_i\T \bar{\theta}_t - x_i\T \theta_\ast}
  & \leq \normw{\bar{\theta}_t - \theta_\ast}{G_t} \normw{x_i}{G_t^{-1}}
  = (\bar{\theta}_t - \theta_\ast)\T G_t (\bar{\theta}_t - \theta_\ast)
  \normw{x_i}{G_t^{-1}} \\
  & = S_t\T V^{-1} G_t V^{-1} S_t \normw{x_i}{G_t^{-1}}
  \leq \dot{\mu}_{\min}^{-2} \normw{S_t}{G_t^{-1}} \normw{x_i}{G_t^{-1}}\,.
\end{align*}
By \eqref{eq:s_t upper bound} in \cref{lem:glm-tsl unit ball}, which is derived using \cref{lem:self-normalized bound}, $\normw{S_t}{G_t^{-1}} \leq \sigma \sqrt{d \log(n / d) + 2 \log n}$ holds with probability at least $1 - 1 / n$ in any round $t \geq \tau$. In this case, event $E_{1, t}$ is guaranteed to occur when $c_1$ is set as in the claim. It follows that $\bar{E}_{1, t}$ occurs on $\normw{\bar{\theta}_t - \theta_\ast}{2} \leq 1$ with probability of at most $1 / n$.
\end{proof}

The number of initial exploration rounds in \glmtsl is set below.

\begin{lemma}
\label{lem:glm-tsl unit ball} Let $\tau$ be any round such that
\begin{align*}
  \lambda_{\min}(G_\tau)
  \geq \max \set{\sigma^2 \dot{\mu}_{\min}^{-2} (d \log(n / d) + 2 \log n), \, 1}\,.
\end{align*}
Then for any $t \geq \tau$, $\prob{\normw{\bar{\theta}_t - \theta_\ast}{2} > 1} \leq 1 / n$.
\end{lemma}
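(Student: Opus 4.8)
The plan is to show that the event $\{\normw{\bar{\theta}_t - \theta_\ast}{2} > 1\}$ forces the self-normalized noise vector $S_t := \sum_{\ell=1}^{t-1}(Y_\ell - \mu(X_\ell^\top\theta_\ast))X_\ell$ to be atypically large, and then to rule the latter out with a martingale tail bound. As a first step I would establish the auxiliary bound on $\normGt{S_t}$ referenced in \cref{lem:theta bar concentration} (the inequality \eqref{eq:s_t upper bound}): conditionally on $\cF_{\ell-1}$ the increment $\eta_\ell := Y_\ell - \mu(X_\ell^\top\theta_\ast)$ is $\sigma^2$-sub-Gaussian (the reward in round $\ell$ is drawn fresh and $I_\ell$ is $\cF_{\ell-1}$-measurable) and $X_\ell$ is $\cF_{\ell-1}$-measurable, so $((X_\ell)_\ell,(\eta_\ell)_\ell,\mathbb{F})$ is a ``nice'' triplet. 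Since the exploration phase guarantees $G_\tau \succeq I_d$ and $G_t = \sum_{\ell=1}^{t-1}X_\ell X_\ell^\top \succeq G_\tau$ for $t \geq \tau$, the (deterministic) time $t-1$ dominates $\tau_0 = \min\{s : V_s \succeq I_d\}$, so \cref{lem:self-normalized bound} applies with $V = I_d$, $R = \sigma$. Using $\det(G_t) \leq (\mathrm{tr}(G_t)/d)^d \leq ((t-1)/d)^d < (n/d)^d$ (here $\normw{X_\ell}{2} \leq 1$) and $\det(V_{\tau_0}) \geq 1$, this yields $\normGt{S_t}^2 < \sigma^2(d\log(n/d) + 2\log n)$ with probability at least $1 - 1/n$.

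The deterministic core of the argument is then carried out on that high-probability event. Suppose, toward a contradiction, that $\normw{\bar{\theta}_t - \theta_\ast}{2} > 1$. Applying \cref{lem:workhorse} with $\cD_1 = \set{(X_\ell, \mu(X_\ell^\top\theta_\ast))}$ and $\cD_2 = \set{(X_\ell, Y_\ell)}$ gives $S_t = \nabla^2 L(\cD_1;\theta')(\bar{\theta}_t - \theta_\ast)$ for some $\theta'$ on the segment $[\theta_\ast,\bar{\theta}_t]$; but this single intermediate point need not lie in the unit ball around $\theta_\ast$, so $\nabla^2 L(\cD_1;\theta') \succeq \dot{\mu}_{\min} G_t$ is not available — this is the \emph{main obstacle}, and it is exactly where the adaptivity issue noted after \eqref{eq:theta bar is close} bites. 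I would get around it by integrating only over the part of the segment that stays inside the ball. Let $s_0 = 1/\normw{\bar{\theta}_t - \theta_\ast}{2} \in (0,1)$ and $f(s) = L(\cD_2;\theta_\ast + s(\bar{\theta}_t - \theta_\ast))$, which has $f''(s) = \sum_\ell \dot{\mu}(\cdot)(X_\ell^\top(\bar{\theta}_t - \theta_\ast))^2 \geq 0$, $f'(1) = \nabla L(\cD_2;\bar{\theta}_t)^\top(\bar{\theta}_t - \theta_\ast) = 0$, and $f'(0) = -S_t^\top(\bar{\theta}_t - \theta_\ast)$. Hence $S_t^\top(\bar{\theta}_t - \theta_\ast) = f'(1) - f'(0) = \int_0^1 f''(s)\,ds \geq \int_0^{s_0} f''(s)\,ds$, and for $s \leq s_0$ the point $\theta_\ast + s(\bar{\theta}_t - \theta_\ast)$ lies within distance $1$ of $\theta_\ast$, so there $f''(s) \geq \dot{\mu}_{\min}\normw{\bar{\theta}_t - \theta_\ast}{G_t}^2$. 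Combining with Cauchy-Schwarz, $\dot{\mu}_{\min}s_0\normw{\bar{\theta}_t - \theta_\ast}{G_t}^2 \leq S_t^\top(\bar{\theta}_t - \theta_\ast) \leq \normGt{S_t}\normw{\bar{\theta}_t - \theta_\ast}{G_t}$, so $\dot{\mu}_{\min}s_0\normw{\bar{\theta}_t - \theta_\ast}{G_t} \leq \normGt{S_t}$; and since $\normw{\bar{\theta}_t - \theta_\ast}{G_t} \geq \sqrt{\lambda_{\min}(G_t)}\,\normw{\bar{\theta}_t - \theta_\ast}{2}$ and $s_0\normw{\bar{\theta}_t - \theta_\ast}{2} = 1$, this reads $\dot{\mu}_{\min}\sqrt{\lambda_{\min}(G_t)} \leq \normGt{S_t}$.

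Finally I would close the loop: $\lambda_{\min}(G_t) \geq \lambda_{\min}(G_\tau) \geq \sigma^2\dot{\mu}_{\min}^{-2}(d\log(n/d) + 2\log n)$ gives $\dot{\mu}_{\min}\sqrt{\lambda_{\min}(G_t)} \geq \sigma\sqrt{d\log(n/d) + 2\log n}$, contradicting $\normGt{S_t} < \sigma\sqrt{d\log(n/d) + 2\log n}$ from the first step. Thus, on the event of probability at least $1 - 1/n$ above, $\normw{\bar{\theta}_t - \theta_\ast}{2} \leq 1$, i.e.\ $\prob{\normw{\bar{\theta}_t - \theta_\ast}{2} > 1} \leq 1/n$. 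I expect the only delicate points to be (i) verifying the conditional sub-Gaussianity and the $\tau \geq \tau_0$ hypothesis so that \cref{lem:self-normalized bound} is legitimately invoked at a fixed round $t \geq \tau$ (this is where the ``$\geq 1$'' term in the condition on $\lambda_{\min}(G_\tau)$ is used), and (ii) the segment-truncation step, which is the device that sidesteps the faulty non-adaptivity assumption in prior work; the rest is routine.
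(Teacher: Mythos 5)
Your proposal is correct, and its skeleton is the same as the paper's: bound $\normGt{S_t}$ with the self-normalized martingale inequality of \cref{lem:self-normalized bound} (your trace/AM--GM bound on $\det(G_t)$ plays the role of the paper's appeal to Lemma 11 of Abbasi-Yadkori et al., and $\det(G_\tau)\geq 1$ comes from $\lambda_{\min}(G_\tau)\geq 1$ in both), and then feed it into the deterministic implication $\normGt{S_t} \leq \dot{\mu}_{\min}\sqrt{\lambda_{\min}(G_t)} \implies \normw{\bar{\theta}_t - \theta_\ast}{2} \leq 1$, using $G_t \succeq G_\tau$. The one substantive difference is that the paper obtains this implication by citation (the argument of Theorem 1 in Li et al., which rests on Lemma A of Chen et al.), whereas you prove it directly via the truncated-segment convexity argument ($f(s)=L(\cD_2;\theta_\ast+s(\bar{\theta}_t-\theta_\ast))$, $f'(1)=0$, $f''\geq \dot{\mu}_{\min}\normw{\bar{\theta}_t-\theta_\ast}{G_t}^2$ on $[0,s_0]$, Cauchy--Schwarz); this is essentially the content of the cited lemma, so your version buys self-containedness and makes transparent that the step is purely deterministic given $S_t$ — which is exactly why the adaptivity problem flagged after \eqref{eq:theta bar is close} does not arise here. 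Your parenthetical that $I_\ell$ is $\cF_{\ell-1}$-measurable is not literally true (the arm choice uses the algorithm's fresh randomization), so the ``nice triplet'' should formally be stated for a filtration enlarged with the perturbation noise up to the choice of $I_\ell$; the paper glosses over the same point, so this is a shared informality rather than a gap in your argument.
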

\begin{proof}
Fix round $t$ and let $S_t = \sum_{\ell = 1}^{t - 1} (Y_\ell - \mu(X_\ell\T \theta_\ast)) X_\ell$. By the same argument as in the proof of Theorem 1 in \citet{li17provably}, who use Lemma A of \citet{chen99strong}, we have that
\begin{align*}
  \normw{S_t}{G_t^{-1}}
  \leq \dot{\mu}_{\min} \sqrt{\lambda_{\min}(G_t)}
  \implies \normw{\bar{\theta}_t - \theta_\ast}{2}
  \leq 1
\end{align*}
Now fix $\tau$ such that $\lambda_{\min}(G_\tau) \geq 1$. For any $t \geq \tau$, $G_t \succeq G_\tau$ and thus
\begin{align}
  \normw{S_t}{G_t^{-1}}
  \leq \dot{\mu}_{\min} \sqrt{\lambda_{\min}(G_\tau)}
  \implies \normw{\bar{\theta}_t - \theta_\ast}{2}
  \leq 1\,.
  \label{eq:inside unit ball}
\end{align}
In the next step, we bound $\normw{S_t}{G_t^{-1}}$ from above. By \cref{lem:self-normalized bound},
\begin{align*}
  \normw{S_t}{G_t^{-1}}^2
  \leq 2 \sigma^2 \log(\det(G_t)^\frac{1}{2} \det(G_\tau)^{- \frac{1}{2}} n)
\end{align*}
holds jointly in all rounds $t \geq \tau$ with probability at least $1 - 1 / n$. By Lemma 11 in \citet{abbasi-yadkori11improved} and from $\normw{X_t}{2} \leq 1$, we get $\log\det(G_t) \leq d \log(n / d)$. By the choice of $\tau$, $\det(G_\tau)^{-1} \leq 1$. It follows that
\begin{align}
  \normw{S_t}{G_t^{-1}}^2
  \leq \sigma^2 (d \log(n / d) + 2 \log n)
  \label{eq:s_t upper bound}
\end{align}
for any $t \geq \tau$ with probability at least $1 - 1 / n$. Now we combine this claim with \eqref{eq:inside unit ball} and have that $\normw{\bar{\theta}_t - \theta_\ast}{2} \leq 1$ holds with probability at least $1 - 1 / n$ when
\begin{align*}
  \lambda_{\min}(G_\tau)
  \geq \sigma^2 \dot{\mu}_{\min}^{-2} (d \log(n / d) + 2 \log n)\,.
\end{align*}
This concludes the proof.
\end{proof}

The number of initial exploration rounds in \glmfpl is set below.

\begin{lemma}
\label{lem:glm-fpl unit ball} Let $\tau$ be any round such that
\begin{align*}
  \lambda_{\min}(G_\tau)
  \geq \max \set{4 \sigma^2 \dot{\mu}_{\min}^{-2} (d \log(n / d) + 2 \log n), \,
  8 a^2 \dot{\mu}_{\min}^{-2} \log n, \, 1}\,.
\end{align*}
Then for any $t \geq \tau$, $\prob{\normw{\bar{\theta}_t - \theta_\ast}{2} > 1 / 2} \leq 1 / n$ and $\probt{\normw{\tilde{\theta}_t - \theta_\ast}{2} > 1} \leq 1 / n$ on event $\normw{\bar{\theta}_t - \theta_\ast}{2} \leq 1 / 2$.
\end{lemma}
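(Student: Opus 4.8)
The plan is to establish the two bounds one at a time, reusing almost verbatim the machinery behind \cref{lem:theta bar concentration,lem:glm-tsl unit ball}.

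\textbf{Step 1: $\|\bar\theta_t-\theta_\ast\|_2\le 1/2$ with probability $\ge 1-1/n$.} This is the proof of \cref{lem:glm-tsl unit ball} run at radius $1/2$ instead of $1$. Setting $S_t=\sum_{\ell=1}^{t-1}(Y_\ell-\mu(X_\ell\T\theta_\ast))X_\ell$ and applying \cref{lem:workhorse} with $\cD_1=\{(X_\ell,\mu(X_\ell\T\theta_\ast))\}_{\ell=1}^{t-1}$ and $\cD_2=\{(X_\ell,Y_\ell)\}_{\ell=1}^{t-1}$ yields $S_t=\nabla^2 L(\cD_1;\theta')(\bar\theta_t-\theta_\ast)$ for some $\theta'$ on the segment $[\theta_\ast,\bar\theta_t]$. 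On $\{\|\bar\theta_t-\theta_\ast\|_2\le 1/2\}$ we have $\nabla^2 L(\cD_1;\theta')\succeq\dot\mu_{\min}G_t$, and the same continuity argument used by \citet{li17provably} (their Theorem 1, via Lemma A of \citet{chen99strong}) gives the implication $\|S_t\|_{G_t^{-1}}\le \tfrac12\dot\mu_{\min}\sqrt{\lambda_{\min}(G_t)}\ \Rightarrow\ \|\bar\theta_t-\theta_\ast\|_2\le 1/2$; this uses only that the minimum derivative of $\mu$ over the radius-$1/2$ ball around $\theta_\ast$ is at least $\dot\mu_{\min}$. Since $G_t\succeq G_\tau$ for $t\ge\tau$ and \eqref{eq:s_t upper bound}---which is proved from \cref{lem:self-normalized bound}---already gives $\|S_t\|_{G_t^{-1}}^2\le\sigma^2(d\log(n/d)+2\log n)$ simultaneously for all $t\ge\tau$ with probability at least $1-1/n$, it suffices that $\tfrac14\dot\mu_{\min}^2\lambda_{\min}(G_\tau)\ge\sigma^2(d\log(n/d)+2\log n)$, which is the first term in the max defining $\tau$.

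\textbf{Step 2: the conditional bound on $\tilde\theta_t$.} Fix the history $\cF_{t-1}$ and work on the $\cF_{t-1}$-measurable event $\{\|\bar\theta_t-\theta_\ast\|_2\le 1/2\}$. Exactly as in the proof of \cref{lem:glm-fpl}, \cref{lem:workhorse} with $\cD_1=\{(X_\ell,Y_\ell)\}$ and $\cD_2=\{(X_\ell,Y_\ell+Z_\ell)\}$ gives $\tilde S_t:=\sum_{\ell=1}^{t-1}X_\ell Z_\ell=\tilde H_t(\tilde\theta_t-\bar\theta_t)$ with $\tilde H_t=\sum_\ell\dot\mu(X_\ell\T\theta'_t)X_\ell X_\ell\T$ and $\theta'_t\in[\bar\theta_t,\tilde\theta_t]$. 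On $\{\|\tilde\theta_t-\theta_\ast\|_2\le 1\}$ we then have $\|\theta'_t-\theta_\ast\|_2\le 1$, hence $\tilde H_t\succeq\dot\mu_{\min}G_t$, and the same continuity argument gives $\|\tilde S_t\|_{G_t^{-1}}\le\tfrac12\dot\mu_{\min}\sqrt{\lambda_{\min}(G_t)}\ \Rightarrow\ \|\tilde\theta_t-\bar\theta_t\|_2\le 1/2\ \Rightarrow\ \|\tilde\theta_t-\theta_\ast\|_2\le 1$. It remains to control $\|\tilde S_t\|_{G_t^{-1}}$ conditionally. The key observation is that $(X_\ell)_{\ell\le t-1}$ is $\cF_{t-1}$-measurable while the $Z_\ell\sim\mathcal N(0,a^2)$ are i.i.d.\ and freshly drawn independently of $\cF_{t-1}$, so given $\cF_{t-1}$ we have $\tilde S_t\sim\mathcal N(0,a^2 G_t)$ and $\|\tilde S_t\|_{G_t^{-1}}^2$ is $a^2$ times a chi-square variable (of rank $d$, since $G_t\succ 0$ for $t\ge\tau$); equivalently, using that the feature vectors are supported on a single coordinate so $G_t$ and $\tilde H_t$ are diagonal, $\tilde\theta_t-\bar\theta_t$ decouples into $d$ one-dimensional Gaussian ratios, each controlled by a scalar Gaussian tail bound. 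A union bound (over coordinates and over rounds $t\ge\tau$) together with the choice $\lambda_{\min}(G_\tau)\ge 8a^2\dot\mu_{\min}^{-2}\log n$ then forces $\|\tilde S_t\|_{G_t^{-1}}^2\le\tfrac14\dot\mu_{\min}^2\lambda_{\min}(G_t)$ on an event of conditional probability at least $1-1/n$, completing the proof.

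\textbf{Main obstacle.} The delicate part of both steps is the ``continuity argument'': the Hessian lower bound $\nabla^2 L\succeq\dot\mu_{\min}G_t$ on which everything hinges is valid only \emph{inside} the ball whose membership we are trying to certify, so one cannot simply invert the Hessian and take norms. Making this rigorous needs the degree-theoretic / fixed-point argument of \citet{chen99strong} (restrict $L$ to the closed ball, use strict convexity there together with the location of its unconstrained minimizer), rather than a one-line matrix manipulation. A secondary but genuine point is the bookkeeping of the conditioning in Step 2: one must use that $G_t$ is $\cF_{t-1}$-measurable and that the perturbations are resampled afresh in round $t$, which is precisely what makes $\tilde S_t$ an \emph{exactly} Gaussian vector given the past and lets the tail bound go through with the advertised condition on $\lambda_{\min}(G_\tau)$.
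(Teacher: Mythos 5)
Your proposal follows essentially the same route as the paper's proof: the first claim is the radius-$1/2$ rerun of \cref{lem:glm-tsl unit ball} via \cref{lem:self-normalized bound} with the factor-$4$ condition on $\lambda_{\min}(G_\tau)$, and the second claim fixes $\cF_{t-1}$, uses that $\sum_{\ell} X_\ell Z_\ell$ is exactly $\mathcal{N}(0, a^2 G_t)$ given the past, runs the Chen-style continuity argument on the $1/2$-ball around $\bar{\theta}_t$ (which lies inside the unit ball around $\theta_\ast$, so $\dot{\mu} \geq \dot{\mu}_{\min}$ there), and concludes by the triangle inequality. The paper bounds $\normw{G_t^{-1/2}\sum_{\ell} X_\ell Z_\ell}{2}$ directly rather than coordinate-wise, so your diagonal-feature detour is unnecessary for this lemma but harmless, and your Gaussian tail step is at the same level of rigor as the paper's.
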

\begin{proof}
Fix round $t$. Let $S_t$ be defined as in \cref{lem:glm-tsl unit ball} and $\tau_1$ be any round such that
\begin{align*}
  \lambda_{\min}(G_{\tau_1})
  \geq \min \set{4 \sigma^2 \dot{\mu}_{\min}^{-2} (d \log(n / d) + 2 \log n), \, 1}\,.
\end{align*}
Then by the same argument as in \cref{lem:glm-tsl unit ball}, $\prob{\normw{\bar{\theta}_t - \theta_\ast}{2} > 1 / 2} \leq 1 / n$ holds for any $t \geq \tau_1$.

Now fix round $t$, history $\cF_{t - 1}$, and assume that $\normw{\bar{\theta}_t - \theta_\ast}{2} \leq 1 / 2$ holds. Let
\begin{align*}
  \bar{S}_t
  = \sum_{\ell = 1}^{t - 1} (Y_\ell + Z_\ell - \mu(X_\ell\T \bar{\theta}_t)) X_\ell
  = \sum_{\ell = 1}^{t - 1} Z_\ell X_\ell\,,
\end{align*}
where the last equality holds because $\sum_{\ell = 1}^{t - 1} (Y_\ell - \mu(X_\ell\T \bar{\theta}_t)) X_\ell = \mathbf{0}$. Since $\normw{\bar{\theta}_t - \theta_\ast}{2} \leq 1 / 2$, the $0.5$-ball centered at $\bar{\theta}_t$ is within the unit ball centered at $\theta_\ast$. So, the minimum derivative of $\mu$ in the $0.5$-ball is not larger than that in the unit ball, and we have by a similar argument to \cref{lem:glm-tsl unit ball} that
\begin{align}
  \normw{\bar{S}_t}{G_t^{-1}}
  \leq \frac{1}{2} \dot{\mu}_{\min} \sqrt{\lambda_{\min}(G_t)}
  \implies \normw{\tilde{\theta}_t - \bar{\theta}_t}{2} \leq \frac{1}{2}\,.
  \label{eq:inside half ball}
\end{align}
By definition, $\normw{\bar{S}_t}{G_t^{-1}} = \normw{U}{2}$ for $U = G_t^{- \frac{1}{2}} \sum_{\ell = 1}^{t - 1} Z_\ell X_\ell$. Since $Z_\ell$ are i.i.d.\ random variables that are resampled in each round, we have $U \sim \mathcal{N}(\mathbf{0}, a^2 I_d)$ given $\cF_{t - 1}$, and that $\normw{U}{2} \leq a \sqrt{2 \log n}$ holds with probability at least $1 - 1 / n$ given $\cF_{t - 1}$. Now we combine this claim with \eqref{eq:inside half ball} and have that $\normw{\tilde{\theta}_t - \bar{\theta}_t}{2} \leq 1 / 2$ holds with probability at least $1 - 1 / n$ for any round $t$ such that
\begin{align*}
  \lambda_{\min}(G_t)
  \geq 8 a^2 \dot{\mu}_{\min}^{-2} \log n.
\end{align*}
For any such round, when $\normw{\bar{\theta}_t - \theta_\ast}{2} \leq 1 / 2$ holds, $\probt{\normw{\tilde{\theta}_t - \theta_\ast}{2} \leq 1} \geq 1 - 1 / n$. This concludes our proof.
\end{proof}

\end{document}